\crefname{figure}{Figure}{Figures}
\newcolumntype{L}[1]{>{\raggedright\let\newline\\\arraybackslash\hspace{0pt}}m{#1}}
\newcolumntype{C}[1]{>{\centering\let\newline\\\arraybackslash\hspace{0pt}}m{#1}}
\newcolumntype{R}[1]{>{\raggedleft\let\newline\\\arraybackslash\hspace{0pt}}m{#1}}
\DeclareMathOperator*{\argmax}{arg\,max}
\DeclareMathOperator*{\argmin}{arg\,min}
\newcommand{\rjr}{mJR\xspace}
\newcommand{\rpjr}{mPJR\xspace}
\theoremstyle{plain}
\newtheorem{theorem}{Theorem}
\newtheorem{lemma}[theorem]{Lemma}
\newtheorem{proposition}[theorem]{Proposition}
\newtheorem{observation}[theorem]{Observation}
\theoremstyle{definition}
\newtheorem{definition}{Definition}
\theoremstyle{remark}
\declaretheorem[style=remark,name=Example,qed=$\diamond$]{example}
\title{Proportional Fairness in Clustering: A~Social~Choice~Perspective}
\author{%
  Leon~Kellerhals\\
  Technische Universität Berlin\\
  \texttt{leon.kellerhals@tu-berlin.de} \\
  \And
  Jannik~Peters\\
  Technische Universität Berlin\\
  \texttt{jannik.peters@tu-berlin.de} \\
}
\begin{document}

\maketitle

\begin{abstract}
  We study the proportional clustering problem of \citeauthor{CFL+19a} (ICML'19) and relate it to the area of multiwinner voting in computational social choice. We show that any clustering satisfying a weak proportionality notion of \citeauthor{BrPe23a} (EC'23) simultaneously obtains the best known approximations to the proportional fairness notion of \citeauthor{CFL+19a}, but also to individual fairness (\citeauthor{JKL20a}, FORC'20) and the ``core'' (\citeauthor{LLS+21a}, ICML'21). In fact, we show that any approximation to proportional fairness is also an approximation to individual fairness and vice versa. Finally, we also study stronger notions of proportional representation, in which deviations do not only happen to single, but multiple candidate centers, and show that stronger proportionality notions of \citeauthor{BrPe23a} imply approximations to these stronger guarantees. 
\end{abstract}

\section{Fair clustering}

Fair decision-making is a crucial research area in artificial intelligence and machine learning. To ensure fairness, a plethora of different fairness notions, algorithms and settings have been introduced, studied, and implemented. One area in which fairness has been applied extensively is \emph{(centroid) clustering}:
We are given a set of $n$ data points which we want to partition into $k$ clusters by choosing $k$ ``centers'' and assigning each point to a center by which it is \emph{represented well}.
Fairness now comes into play when, e.g., the data points correspond to human individuals.

Fairness notions in clustering usually depend on one decision: whether one takes demographic information (such as gender, income, etc.) into account or whether one is agnostic to it.
A large part of work on fair clustering has focused on incorporating such demographic information, starting with the seminal work of \citet{CKLV17a} who aimed to proportionally balance the number of people of a certain type in each cluster center. 
However, not all work on fair clustering relies on demographic information.
Independently, and in different contexts, \citet*{JKL20a} and \citet*{CFL+19a} instead tried to derive fairness notions from the instance itself.
For \citeauthor{JKL20a} this lead to their notion of \emph{individual fairness}:
Given a population of size $n$, with $k$ cluster centers to be opened, every agent should be entitled to a cluster center not further away than their $\frac{n}{k}$-th neighbor.
While this is not always achievable, \citeauthor{JKL20a} gave a simple algorithm achieving a $2$-approximation to this notion. 
\citeauthor{CFL+19a} were motivated not by being fair towards individual members of the population (or agents), but towards groups of agents, defining their notion of \emph{proportional fairness}:
no group of size at least $\frac{n}{k}$ should be able to suggest a cluster center they all would be better off with.
This notion is also not always achievable, and \citeauthor{CFL+19a} gave a simple $(1 + \sqrt{2})$-approximation for it.

So far, the individual and proportional fairness notions (and some other related fairness notions) have existed in parallel, with similarities between the two being acknowledged but not formalized.\footnote{For instance, in a recent tutorial \citep{ABFair} on fair clustering, the two notions were treated as separate unconnected paradigms.}
In their survey, \citet{DEMZ24a} highlight this as a general issue in fair clustering: ``each notion that was introduced [...] does not refer to or consider the interaction with the previously introduced fairness notions in clustering''.
Moreover, they call for ``other fairness notions in clustering that are also compatible with one another'' and ``general notions which possibly encompass existing ones''.

We follow this call and prove proportional and individual fairness, as well as a fairness notion by \citet{LLS+21a} which we will call the \emph{transferable core}, to be tightly related to another.
In an effort encompass the three notions,
we make use of proportionality axioms from \emph{multiwinner voting}, an area in Computational Social Choice \citep{LaSk22a}.
Here, given the votes of $n$ agents, the goal is to elect a size-$k$ committee which fulfills some proportionality guarantee.
We lift one of the simplest proportionality guarantees (JR) to work with metric distances and prove that any clustering fulfilling our guarantee also fulfills the best approximations for the three notions, all \emph{simultaneously}.
Moreover, such a clustering can be computed in polynomial time.
Taking the multiwinner voting approach further, we also look at the lifted version of a stronger proportionality guarantee (PJR).
This changes how points (agents) interact with cluster centers as they become represented not by one, but possibly multiple centers.
While this is not standard for ``vanilla'' clustering, it is very fitting for more democratic settings, where the chosen ``centers'' end up possessing voting power to represent the agents.
The resulting proportionality guarantee indeed highly relates to work by \citet{EbMi23a} who, motivated by \emph{sortition} (the randomized selection of citizens' panels \citep{FGG+21a}), introduced a generalization of the proportional fairness notion.
Indeed, the multiwinner voting perspective allows us to prove better approximation guarantees for their fairness notion.

\paragraph{Our contributions.}
As our first main result, we provide a simple bridge between proportional fairness and individual fairness (see \Cref{sec:bridges}). Any approximation of the former is also an approximation of the latter. In particular, for any $\alpha, \beta \ge 1$ we show that (i) any $\alpha$-approximation to proportional fairness is also an $(1 + \alpha)$-approximation to individual fairness and (ii) any $\beta$-approximation to individual fairness is also a $2\beta$-approximation to proportional fairness.
These approximations are tight.
We also prove a similar connection between proportional fairness and the transferable core. 
Our connections imply for instance that bi-criteria approximations that optimize $k$-means and, say, individual fairness \citep{VaYa22b,BCEL24a} also maintain approximations guarantees to the other fairness notions.
Further, if one wants to show incompatibility with a different clustering notion with approximate proportional or individual fairness, it is sufficient to show this for one of the two notions, instead of creating instances for both (as done by \citet{DEMZ23a}). 

Secondly, in \Cref{sec:jrpjr}, we draw a connection to the area of multiwinner voting and reinterpret proportionality notions introduced by \citet{BrPe23a} to work with distance metrics; we call the resulting guarantees \rjr and \rpjr. Both of these are efficiently computable when the space of possible centers is finite.
Remarkably, with simple proofs, we are able to show that any clustering satisfying \rjr achieves the \emph{best known} approximations to individual and proportional fairness notions and the transferable core.
For the transferable core we even improve upon the bound derived by \citet{LLS+21a}.
Finally, motivated by settings such as sortition and multiwinner voting in which agents do not only care about their closest cluster center but are represented by multiple centers, we show that a strong core stability guarantee (introduced by \citet{EbMi23a}) can be achieved by any clustering satisfying \rpjr.
We also deal with the case in which the center candidate space is unbounded (e.g., in Euclidean clustering settings), in which the above-mentioned algorithms can become intractable.
Here, we show that satisfying the proportionality guarantees only for the set of agents is sufficient to obtain constant-factor approximations to proportional fairness and the core stability guarantee by \citet{EbMi23a}.

Lastly, in \cref{sec:sortition}, we focus on sortition: Here, the set of agents and cluster candidates is equal and each agent must be chosen with equal probability.
Employing techniques from the above results, we are able to give a simpler proof achieving a better approximation guarantee for the core notion by \citet{EbMi23a}. 

\cref{fig:results} (right) gives an overview over our results and the exact approximation guarantees.

\begin{figure}[t]
    \centering
    \hfill
    \begin{minipage}{0.778\linewidth}
    \begin{tikzpicture}[
        arc/.style={
        },
        every edge/.style = {draw,->,>=stealth,semithick},
        every edge quotes/.style = {auto, font=\footnotesize, sloped, inner sep=1pt},
    ]

        \node[draw] (rPJR)    at (3.55,2.3) {\rpjr};
        \node[draw] (rJR)    at (0.00,2.3) {\rjr};
        
        \node[draw,double,rounded corners,font=\scshape,above= of rPJR,yshift=-.5cm] (EA) {expanding approvals};
        \node[draw,double,rounded corners,font=\scshape,above= of rJR,yshift=-.5cm] (GC) {greedy capture};
        \node[draw,double,rounded corners,font=\scshape,right=of EA,xshift=-.86cm] (FGC) {fair greedy capture};
        
        \node[draw] (if) at ( -1.0,1.0) {$\beta$-IF};
        \node[draw] (pf)  at ( 2.0,1.0) {$\alpha$-PF};
        \node[draw] (tc) at ( 4.7,1.0) {$(\gamma,\alpha)$-TC};
        
        \node[draw] (qc) at ( 6.7,1.0) {$\alpha$-$q$-core};
        
        \draw (GC) edge (rJR);
        \draw (EA) edge (rPJR);
        \draw (rPJR) edge (rJR);
        
        \draw (pf) edge["{$\frac{\gamma(\alpha+1)}{\gamma-1}$}"'] (tc);
        \draw (pf) edge["{$1+\alpha$}", bend left=15] (if);
        \draw (if) edge["{$2\beta$}", pos=0.5, bend left=5] (pf);
        
        \draw (rJR)  edge["{$\frac{2\gamma}{\gamma-1}$}", pos=0.5] (tc);
        \draw (rJR)  edge["{$2.41$}"', pos=0.4] (pf);
        \draw (rJR)  edge["{$2$}", pos=0.4] (if);
        \draw (rPJR)  edge["{$5$}", pos=0.5] (qc);
	\draw (FGC)  edge["{$3.56$}", pos=0.5] (qc);
    \end{tikzpicture}
    \end{minipage}
    \hfill
    \begin{minipage}{0.094\linewidth}
    \begin{tikzpicture}[scale=0.92]
        \begin{scope}[rotate=270,scale=.7]
            \node [inner sep=1.5pt,align=center] (v1) at (.2,0.5) {$1\,\ 2$\\$3\,\,4$};
            
	    \node[inner sep=1.5pt] (v4) at (2,0.5) {$5$};
            
            \node[inner sep=1.5pt] (v5) at (3,0.5) {$6$};
            
            \node[inner sep=1.5pt] (v6) at (4,0.5) {$7$};
        
            \node[inner sep=1.5pt] (v7) at (2,-0.5) {$8$};
        
            \node[inner sep=1.5pt] (v8) at (3,-0.5) {$9$};
        
            \node[inner sep=1.5pt] (v9) at (4,-0.5) {$10$};
            
            \draw (v1) -- node[left,xshift=.2em,font=\small] {\footnotesize $10$} (v4);
            \draw (v4) -- (v5);
            \draw (v5) -- (v6);
            \draw (v4) -- (v7);
            \draw (v5) -- (v8);
            \draw (v6) -- (v9);
            \draw (v7) -- (v8);
            \draw (v8) -- (v9);
        \end{scope}
    \end{tikzpicture}
    \end{minipage}
    \hfill
    \caption{
        \emph{Left:}
        An overview over connections between and bounds on fairness notions, i.e., $\alpha$-proportional fairness ($\alpha$-PF), $\beta$-individual fairness ($\beta$-IF), the $(\gamma, \alpha)$-transferable core ($(\gamma, \alpha)$-TC), and the $\alpha$-$q$-core.
        See \cref{sec:bridges,sec:jrpjr} for the corresponding definitions and results.
        If $\textsc{a} \to \Pi$, then algorithm \textsc{a} produces outcomes satisfying $\Pi$.
        If $\Pi \to \Gamma$, then any outcome satisfying $\Pi$ also satisfies $\Gamma$.
        If $\Gamma$ takes a parameter $\alpha$, then the label specifies the parameter that can be satisfied (for the transferable core, the result holds for all $\gamma > 1$).
        \emph{Right:}
        The metric space for the examples used throughout the paper. Edges without labels have length $1$, the distance between any two points is given by the length of the shortest path between them.
    }
    \label{fig:results}
\end{figure}
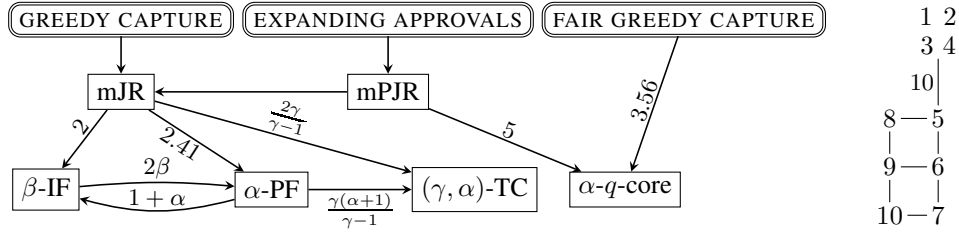

\paragraph{Related work.}
\emph{Individual fairness} was introduced by \citet{JKL20a}.
Since then, follow-up work mainly focused on bi-criteria approximation guarantees \citep{MaVa20a, NeCh21a, VaYa22b, CDCS22a, BCEL24a}. Additionally, \citet{HXXY23a} studied individual fairness for clustering with outliers and \citet{StCo23a} incorporated demographic information into individual fairness.
The individual fairness notion was also carried over to the setting of approval-based multiwinner voting \citep{BIMP22a}.
We mention that the name ``individual fairness'' is also used for other (unrelated) fairness notions \citep[e.g.][]{KKM+23a,CDE+22b}.

\emph{Proportional fairness} was first studied by \citet{CFL+19a}.
\citet{MiSh20a} showed that the \textsc{greedy capture} algorithm by \citeauthor{CFL+19a} achieves better approximation guarantees in certain metric spaces (including the Euclidean space with the $2$-norm) and studied its complexity.
\citet{LLS+21a} introduced notions inspired by \citeauthor{CFL+19a}, which are related to the transferable core concept from algorithmic game theory.
\citet{ALM23a} introduces proportionality axioms and rules directly inspired from social choice theory to proportional clustering.
Among other things, they show that every outcome satisfying DPRF (see \cref{ssec:jr-axioms}) achieves an $(1+\sqrt{2})$-approximation to proportional fairness.
Further connections to social choice or relations between the above fairness notions of \citet{JKL20a} or \citet{LLS+21a} remain unexplored, though.

\citet{EbMi23a} study proportionality in the setting of sortition (see e.g., \citet{FGG+21a}), proposing a generalization of proportional fairness and a refined variant of \textsc{greedy capture}.
This variant and its proportionality were used by \citet{CMP24a} to construct panels whose decisions align with that of the underlying population.
The most recent work directly related to ours was created independently and in parallel to ours by \citet{KKK24b}.
They study proportional fairness and the transferable core in an incomplete information setting and show that just knowing the order of the distances to between agents and center candidates suffices to achieve a 5.71-approximation to proportional fairness.

\emph{Multiwinner voting} is the branch of computational social choice theory dealing with selecting multiple instead of just one candidate as a winner.
A main branch herein focuses on \emph{proportionality}.
While much of the literature on proportionality, starting with \citet{ABC+16a}, focuses on approval preferences (see \citet{LaSk22a} for a recent book on this topic),
proportionality notions also exist for ordinal preferences \citep{Dumm84a}.
These notions were recently strengthened by \citet{AzLe20a, AzLe21a} and \citet{BrPe23a}, with the latter forming the basis for the proportionality axioms we discuss in this paper. 
We are further closely related to the works of \citet{CSV22a} and \citet{EKM+22a} who studied the representation of a given committee by investigating the distances of agents to their $q$-closest committee member.

\paragraph{Model and notation.}
Let $(\mathcal X, d)$ be a (pseudo)-metric space with a distance function $d \colon \mathcal X \times \mathcal X \to \mathbb R$ satisfying $d(i, i) = 0$, $d(i, j) = d(j, i)$ and $d(i, j) + d(j, k) \ge d(i,k)$.
Let $i \in \mathcal X$ be a point.
For $r \in \mathbb R$, define $B(i, r) = \{j \in \mathcal X \colon d(i,j) \le r\}$ to be the ball of radius $r$ around $i$.
For $W \subseteq \mathcal X$, let $d(i, W) = \min_{c \in W} d(i, c)$. %
For $q \le |W|$, $d^q(i, W)$ is distance to the $q$-th closest point in $W$ to $i$. %
Note that $d^1(i, W) = d(i, W)$ and that $d^q(i, W) \le d(i, j) + d^q(j, W)$ for $i, j \in N$.

Throughout the paper, we are given a set of \emph{agents} $N = [n]$ and a (possibly infinite) set of \emph{candidates (facilities)} $C$,
both of which lie in a metric space $(\mathcal X, d)$, and a number $k \in \mathbb N^+$.
A \emph{clustering} or \emph{outcome} is a subset $W \subseteq C$ of at most $k$ candidates.
The elements $c \in W$ are called \emph{centers}.
Our examples use the \emph{(weighted) graph metric}
in which the points are the vertices of a graph with edge lengths, and the distance between two points is the length of a shortest path between them.

\section{Relations between proportional fairness notions}
\label{sec:bridges}
In this section, we prove the relations between proportional fairness \citep{CFL+19a}, individual fairness \citep{JKL20a}, and the transferable core \citep{LLS+21a}.
We first define the notions.

The idea of \emph{proportional fairness} is the following:
If there is a candidate $c$ such that at least $\frac{n}{k}$ agents are closer to $c$ by a factor $\alpha$ than to their closest cluster center in the outcome $W$, then we say that the agents will \emph{deviate} to $c$.
If there is no such candidate, the outcome satisfies $\alpha$-proportional fairness.
\begin{definition}
	\label{def:PF}
	For $\alpha \ge 1$ an outcome $W$ satisfies \emph{$\alpha$-proportional} fairness, if there is no group $N' \subseteq N$ of agents with $\lvert N' \rvert \ge \frac{n}{k}$ and $c \notin W$ such that $\alpha \cdot d(i, c) < d(i, W)$ for all $i \in N'$.
\end{definition}
While $(2 - \varepsilon)$-proportional fair outcomes need not exist (for any $\varepsilon > 0$), $(1 + \sqrt{2})$-proportional fair outcomes can be computed for any metric space \citep{CFL+19a, MiSh20a}.

To define \emph{individual fairness},
denote by $r_{N,k}(i)$ be the radius of the smallest ball around an agent $i \in N$ that encloses at least $\frac{n}{k}$ agents,
i.e., $r_{N,k}(i) = \min\{r \in \mathbb R \colon |B(i, r) \cap N| \ge \frac{n}{k}\}$.
We drop the subscripts $N$ and $k$ if clear from context.
For this definition to properly work, we additionally need the assumption that $N \subseteq C$, i.e., any agent can be chosen as center.
Otherwise, a secluded group of agents without any possible cluster centers around them would never be able to get a center close to them in the outcome.
Indeed, this is a plausible restriction in metric clustering, as oftentimes the centers may be picked from the (infinite) set of points in the metric space.%
\begin{definition}
	\label{def:IF}
	For an instance with $N \subseteq C$, for $\beta \ge 1$
	an outcome $W$ satisfies \emph{$\beta$-individual fairness} if $d(i, W) \le \beta r_{N,k}(i)$ for all $i \in N$.
\end{definition}
It is known that an outcome satisfying $2$-individual fairness always exists, while there are instances with no $(2 - \varepsilon)$-individually fair outcome \citep{JKL20a}. 

The \emph{transferable core}\footnote{We remark that \citet{LLS+21a} call this notion just ``core''.} notion is based on the concept of transferable utilities from game theory.
Comparing to proportional fairness, the notion considers the average utility for each group.

\begin{definition}
	\label{def:TC}
	For $\gamma, \alpha \ge 1$, an outcome $W$ is in the \emph{$(\gamma, \alpha)$-transferable core} if there is no group of agents $N' \subseteq N$ and candidate $c \notin W$ with $\lvert N' \rvert \ge \gamma \frac{n}{k}$ and
	$ %
	\alpha \sum_{i \in N'} d(i, c) < \sum_{i \in N'} d(i, W).
	$ %
\end{definition}
It is known that the for any $\gamma > 1$ there are outcomes in the $\smash{(\gamma, \max(4, \frac{3\gamma - 1}{\gamma - 1}))}$-transferable core while there need not be outcomes in the $\smash{(\gamma, \min(1, \frac{1}{\gamma - 1}))}$-transferable core \citep{LLS+21a}.
\newcommand{\exDefPFIFTC}{
\begin{example}
    Consider the instance depicted in \cref{fig:results} (right) with $k = 5$ and the associated graph distance metric.
    Assume that cluster centers can only be placed on the depicted agents.
    We have $\frac{n}{k} = 2$; thus any two agents are able to deviate to another center.
    The outcome $W = \{1,2,3,6,9\}$ satisfies $1$-proportional fairness:
    The agents $1, \dots, 4$ have distance $0$ to a center, while every remaining agent has distance at most $1$ to a center.

    To see the difference between proportional fairness, individual fairness, and the transferable core, consider the same instance with $k = 4$, so $\frac{n}{k} = 2.5$.
    Here, the outcome $W = \{1,2,6,7\}$ satisfies $1$-proportional fairness, however it does not satisfy $1$-individual fairness.
    Agent $8$ could look at their $2$ closest neighbors, $5$ and $9$, both at a distance of $1$.
    However, the distance of $8$ to the outcome is $2$.
    Observe that $W$ also is not in the $(1,1)$-transferable core.
    Here, for the group $N' = \{8, 9, 10\}$ and candidate $c = 9$, we have $\sum_{i \in N'} d(i, c) = 2 < \sum_{i \in N'} d(i, W) =4$.
\end{example}
} \exDefPFIFTC

\subsection{Proportional and individual fairness}
We first show that proportional and individual fairness are the same up to a factor of at most $2$.
\begin{theorem}[
    label=thm:IF-PF,
    restate=IFPF]
    Let $\alpha, \beta \ge 1$. If $N \subseteq C$, then
    an outcome that satisfies $\alpha$-proportional fairness also satisfies $(1 + \alpha)$-individual fairness,
    and an outcome that satisfies $\beta$-individual fairness also satisfies $2\beta$-proportional fairness.
    If $N = C$, then
    an outcome that satisfies $\beta$-individual fairness also satisfies $(1 + \beta)$-proportional fairness.
\end{theorem}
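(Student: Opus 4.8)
The plan is to prove each of the three implications separately, all by relating, for a fixed agent $i$, the radius $r_{N,k}(i)$ to the distance that agent could achieve by deviating to a nearby candidate. Throughout, the key geometric fact is that the ball $B(i, r_{N,k}(i))$ contains at least $\frac{n}{k}$ agents by definition of $r_{N,k}$, so any candidate inside that ball is a legitimate target for a deviating group of the required size; conversely, if a group of $\frac{n}{k}$ agents wants to deviate to a common candidate $c$, then $c$ is ``close'' to all of them and in particular the ball around any one of them of radius roughly $d(i,c)$ captures that whole group.

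\textbf{(1) $\alpha$-PF $\Rightarrow$ $(1+\alpha)$-IF (assuming $N \subseteq C$).} Suppose $W$ is $\alpha$-proportionally fair, and fix an agent $i$; write $r = r_{N,k}(i)$. The set $N' = B(i,r) \cap N$ has size at least $\frac{n}{k}$. Since $N \subseteq C$, the agent $i$ itself is an available candidate; if $i \notin W$, then because $W$ is $\alpha$-PF the group $N'$ cannot all strictly improve by a factor $\alpha$ by deviating to $i$, so there is some $j \in N'$ with $d(j, W) \le \alpha\, d(j, i) \le \alpha r$. Then by the triangle inequality $d(i, W) \le d(i,j) + d(j,W) \le r + \alpha r = (1+\alpha) r$. (If $i \in W$ then $d(i,W) = 0$ and the bound is trivial.) This is where one must be slightly careful: the deviation in Definition \ref{def:PF} is strict ($\alpha\,d(j,c) < d(j,W)$), so the failure of a deviation gives the non-strict inequality $d(j,W) \le \alpha\,d(j,i)$ for at least one $j$, which is exactly what is needed.

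\textbf{(2) $\beta$-IF $\Rightarrow$ $2\beta$-PF (assuming $N \subseteq C$).} Suppose $W$ satisfies $\beta$-individual fairness but, for contradiction, is not $2\beta$-proportionally fair: there is a group $N'$ with $|N'| \ge \frac{n}{k}$ and a candidate $c \notin W$ with $2\beta\, d(i,c) < d(i,W)$ for all $i \in N'$. Pick $i^\star \in N'$ minimizing $d(i,c)$ over $N'$; set $\rho = d(i^\star, c)$. Every $j \in N'$ satisfies $d(j, i^\star) \le d(j,c) + d(c, i^\star) \le 2\rho$ (using $d(j,c) \ge \rho$... actually $d(j,c)$ could be larger, so use $d(j,i^\star)\le d(j,c)+\rho$), so $N' \subseteq B(i^\star, 2\max_{j}d(j,c))$ — to make this clean, instead choose $i^\star$ to \emph{maximize} $d(i,c)$; then every $j\in N'$ has $d(j,c)\le d(i^\star,c)$, hence $d(j,i^\star)\le d(j,c)+d(c,i^\star)\le 2d(i^\star,c)$, so $B(i^\star, 2d(i^\star,c))$ contains all of $N'$, i.e. at least $\frac{n}{k}$ agents, giving $r_{N,k}(i^\star) \le 2 d(i^\star, c)$. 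Then $\beta$-IF yields $d(i^\star, W) \le \beta r_{N,k}(i^\star) \le 2\beta\, d(i^\star, c)$, contradicting $2\beta\, d(i^\star,c) < d(i^\star, W)$. The main subtlety is choosing the right representative of $N'$ (the farthest-from-$c$ agent) so that the ball centered there covers the whole group with only a factor-$2$ blowup.

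\textbf{(3) $\beta$-IF $\Rightarrow$ $(1+\beta)$-PF when $N = C$.} Here we can sharpen step (2). Again suppose some group $N'$ of size $\ge \frac{n}{k}$ deviates to $c \notin W$ with $(1+\beta)\, d(i,c) < d(i,W)$ for all $i \in N'$. Since $N = C$, the point $c$ is itself an agent; let $N'' = B(c, r_{N,k}(c)) \cap N$, which has size $\ge \frac{n}{k}$. By $\beta$-IF applied to the agent $c$, we get $d(c, W) \le \beta\, r_{N,k}(c)$. Now I would argue that some agent close to $c$ forces a center close to $c$: more directly, pick any $i \in N'$; then $d(i, W) \le d(i,c) + d(c,W) \le d(i,c) + \beta\, r_{N,k}(c)$, so it suffices to show $r_{N,k}(c) \le d(i,c)$ for a suitable $i \in N'$ — indeed take $i \in N'$ farthest from $c$, so $N' \subseteq B(c, d(i,c))$ (as $c=c$ is the center, every $j\in N'$ has $d(j,c)\le d(i,c)$), whence $r_{N,k}(c) \le d(i,c)$. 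Then $d(i,W) \le d(i,c) + \beta\, d(i,c) = (1+\beta) d(i,c)$, contradicting $(1+\beta) d(i,c) < d(i,W)$. The gain over (2) comes precisely from using $c$ itself (available since $N=C$) as the ball center, which avoids the factor-$2$ doubling of the radius.

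I expect the only real obstacle to be bookkeeping the strict-versus-non-strict inequalities in Definitions \ref{def:PF} and \ref{def:IF} and, in parts (2) and (3), selecting the correct agent of $N'$ to serve as the center of the enclosing ball so that the radius blowup is as claimed; once the right representative is fixed, each argument is a single application of the triangle inequality together with the defining property of $r_{N,k}$.
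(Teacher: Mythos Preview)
Your proposal is correct and matches the paper's proof essentially step for step: in all three parts you use the same representative (the agent farthest from $c$ in (2) and (3), and the agent $i$ itself as candidate in (1)) and the same single triangle-inequality estimate. The only cosmetic differences are that you explicitly treat the $i\in W$ case in (1) and record a false start in (2) before settling on $i^\star=\arg\max_{i\in N'} d(i,c)$, which is exactly the paper's choice.
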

\newcommand{\proofIFPF}{
\begin{proof}
	Let $W \subseteq C$ be an outcome satisfying $\alpha$-proportional fairness, $j \in N$ be any agent, and $N_j = \{ i \in N \colon d(i,j) \le r(j)\}$.
	As $N \subseteq C$, there is an $i \in N_j$ with $d(i, W) \le \alpha d(i,j)$; otherwise the coalition $N_j$ deviates to candidate $j$.
	Thus, by the triangle inequality,
	\(
		d(j, W) \le d(i, j) + d(i, W) \le (1+\alpha) d(i, j) \le (1+\alpha) r(j),
	\)
	and $W$ satisfies $(1+\alpha)$-individual fairness.

	Now suppose the outcome $W$ satisfies $\beta$-individual fairness.
	Let $N' \subseteq N$ with $|N'| \ge \frac{n}{k}$ and $c \notin W$ be an unchosen candidate.
	Take $i^* \in N'$ to be the agent in $N'$ furthest away from $c$.
    If $N \subseteq C$, then
	the radius $r(i^*)$ containing $\lceil\frac{n}{k}\rceil$ agents is at most as large as the most distant agent in $N'$,
	i.e., there is an $i' \in N'$ with $r(i^*) \le d(i^*, i') \le d(i^*, c) + d(c, i')$.
	Then
	\(
		d(i^*, W) \le \beta r(i^*) \le \beta (d(i^*, c) + d(c, i')) \le 2 \beta d(i^*, c).
	\)
    If $N=C$,
    then, since $|N'| \ge \frac{n}{k}$, we have $r(c) \le d(c, i^*)$; thus $d(c, W) \le \beta d(c, i^*)$.
    Therefore,
	$ %
		d(i^*, W) \le d(i^*, c) + d(c, W) \le (1+\beta) d(i^*, c).
	$ %
\end{proof}
} %
\proofIFPF

Indeed, all three provided bounds are tight.

\begin{theorem}[
	label=thm:if-pf-tightness,
	restate=IFPFtight
	]
    For every $\alpha, \beta \ge 1$ and $\varepsilon > 0$,
    there are instances with $N = C$ for which there exists
    \begin{inparaenum}[(1)]
        \item an outcome which satisfies $\alpha$-proportional fairness, but not $(1 + \alpha - \varepsilon$)-individual fairness, and
        \item an outcome which satisfies $\beta$-individual fairness, but not $(1 + \beta - \varepsilon$)-proportional fairness.
        Moreover, there are instances with $N \subseteq C$ for which there exists
        \item an outcome which satisfies $\beta$-individual fairness, but not $(2\beta - \varepsilon)$-proportional fairness.
    \end{inparaenum}
\end{theorem}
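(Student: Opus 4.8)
The plan is to prove (a), (b) and (c) by three separate small metric instances, each with an explicit outcome $W$ (of size $\le k$) that witnesses that one of the three implications of \cref{thm:IF-PF} is best possible. In every case there are exactly two things to check: that $W$ does satisfy the weaker guarantee ($\alpha$-proportional fairness in (a), $\beta$-individual fairness in (b) and (c)), which I would reduce to verifying that no candidate outside $W$ attracts a deviating coalition of the required size; and that $W$ violates the stronger guarantee, which will be read off directly from the distances.

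For (a) I would work on a line (a path metric) with $N=C$ and choose $k$ so that $n/k$ lies strictly between $2$ and $3$ — concretely $n=5$, $k=2$, so $n/k=2.5$. Place a ``central'' agent $j$, two further agents at distance $1$ on either side of $j$, and two more agents, which will be the centers, at distance $\alpha+1$ beyond those; let $W$ be the two extreme agents. Then $r_{N,k}(j)=1$ (the third-closest agent to $j$ sits at distance $1$), whereas $d(j,W)=\alpha+1=(\alpha+1)\,r_{N,k}(j)$, so $W$ is not $(\alpha+1-\varepsilon)$-individually fair. For $\alpha$-proportional fairness the only candidates outside $W$ are the three middle agents, and for each of them one checks that at most two of the five agents strictly prefer it by a factor $\alpha$: the two extreme agents are at distance $0$ from $W$, and the two unit-distance neighbours of $j$ are at distance exactly $\alpha$ from $W$ but only distance $1$ from each other. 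Since $2<n/k$, no forbidden coalition exists. (To match \cref{fig:if-pf-tightness}(a) more closely one can add agents co-located with a center so that $n/k$ becomes an integer $\ge 3$; these are at distance $0$ from $W$ and join no deviation.)

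For (b) I would use the star of \cref{fig:if-pf-tightness}(b): a hub $c$, three agents at distance $1$ from $c$, and two ``remote'' points $w_1,w_2$ at distance $\beta$, all six points being agents and candidates ($N=C$), with $n/k=3$ (so $n=6$, $k=2$) and $W=\{w_1,w_2\}$. Here $r_{N,k}(c)=1$ and $d(c,W)=\beta=\beta\,r_{N,k}(c)$, while each of the three close agents has radius $2$ (their pairwise distances are $2$) and distance $1+\beta\le 2\beta$ to $W$; hence $W$ is $\beta$-individually fair. But the three close agents form a group of size exactly $n/k$, each of whom strictly prefers $c$ (distance $1$) over $W$ (distance $1+\beta$) by any factor below $1+\beta$, so $W$ is not $(1+\beta-\varepsilon)$-proportionally fair. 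Part (c) is analogous using \cref{fig:if-pf-tightness}(c), where crucially $N\subsetneq C$: four agents, each at distance $1$ from a \emph{non-agent} hub $c$ and distance $2\beta$ from a \emph{non-agent} point $w$, with $n/k=2$ ($k=2$) and $W=\{w\}$. All pairwise agent distances equal $2$, so every agent has radius $2$ and distance $2\beta=\beta\cdot 2$ to $W$, giving $\beta$-individual fairness; and any two agents form a group of size $n/k$ strictly preferring $c$ over $W$ by a factor arbitrarily close to $2\beta$, so $W$ is not $(2\beta-\varepsilon)$-proportionally fair.

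The step I expect to be the main obstacle is the proportional-fairness check in (a): unlike in (b) and (c), where the positive guarantee is individual fairness and is immediate, here one must rule out every (coalition, candidate) pair, and the instance is designed so that the remote centers are too far to help the middle agents, the extreme agents are already served, and no single middle candidate is simultaneously attractive by the factor $\alpha$ to as many as $n/k$ agents — which is exactly why $n/k$ must be taken larger than $2$. A secondary nuisance is the bookkeeping of any auxiliary agents introduced to make $n/k$ take a convenient value without letting them create new deviations.
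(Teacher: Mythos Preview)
Your proposal is correct and follows the same strategy as the paper: exhibit small explicit metric instances for each of (a)--(c). Your constructions for (b) and (c) are exactly those of the paper (the paper takes $k=1$ in (c), but your $k=2$ works identically since the individual-fairness radius is still $2$ and any two agents already form a deviating coalition). For (a) the paper instead uses an asymmetric instance---three co-located agents forming the outcome, then a path $4\text{--}5\text{--}6$ with the first edge of length $\alpha$---so your symmetric five-point line is a genuinely different witness; it has the pleasant side effect that your check goes through for every $\alpha>1$, whereas in the paper's instance the coalition $\{4,5,6\}$ can still deviate to $4$ when $\alpha<2$. Two small slips to tidy: ``at distance $\alpha+1$ beyond those'' should be read as $d(j,w_i)=\alpha+1$ (so the neighbours sit at distance $\alpha$, not $\alpha+1$, from the centers), and the two neighbours of $j$ are at mutual distance $2$, not $1$---neither affects the argument.
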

\newcommand{\proofIFPFtight}{
\begin{proof}
    The constructed instances are based on points under a graph metric and are depicted in \cref{fig:if-pf-tightness}.
    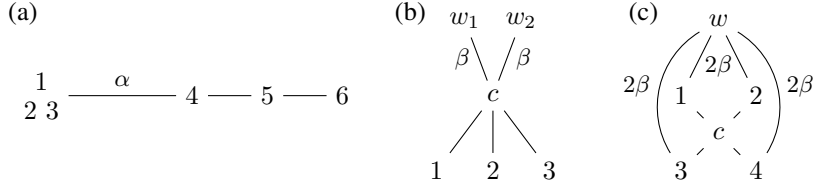
\begin{figure}
        \centering
        \begin{tikzpicture}
            \begin{scope}
                \node at (-0.25,1.6) {(a)};
                \node [align=center] (v1) at (0,0.5) {$1$\\$2\,\,3$};
                
                \node (v4) at (2,0.5) {$4$};
                
                \node (v5) at (3,0.5) {$5$};
                
                \node (v6) at (4,0.5) {$6$};
                
                \draw (v1) -- node[above] {\footnotesize $\alpha$} (v4);
                \draw (v4) -- (v5);
                \draw (v5) -- (v6);
            \end{scope}
            \begin{scope}[xshift=5.5cm]
                \node at (-0.6,1.6) {(b)};

                \node (w1) at (0.125,1.5) {$w_1$};
                \node (w2) at (0.875,1.5) {$w_2$};

                \node (c) at (0.5,0.5) {$c$};
                \node (v1) at (-0.25,-0.5) {$1$};
                \node (v2) at (0.5,-0.5) {$2$};
                \node (v3) at (1.25,-0.5) {$3$};

                \draw
                    (c) -- (v1)
                    (c) -- (v2)
                    (c) -- (v3);

                \draw (c) to node[left] {\footnotesize $\beta$} (w1);
                \draw (c) to node[right] {\footnotesize $\beta$} (w2);
            \end{scope}
            \begin{scope}[xshift=8.5cm]
                \node at (-0.5,1.6) {(c)};
                \node (v1) at (0,0.5) {$1$};
                \node (v2) at (1,0.5) {$2$};
                \node (v3) at (0,-.5) {$3$};
                \node (v4) at (1,-.5) {$4$};
                \node (w)  at (0.5,1.5) {$w$};
                \node (c)  at (0.5,0) {$c$};

                \node at (0.5,0.9) {\footnotesize $2\beta$};

                \draw
                    (c) -- (v1) -- (w)
                    (c) -- (v2) -- (w)
                    (c) -- (v3)
                    (c) -- (v4);

                \draw[bend left=45]
                    (v3) to node[left] {\footnotesize $2\beta$} (w);
                \draw[bend right=45]
                    (v4) to[bend right] node[right] {\footnotesize $2\beta$} (w);
            \end{scope}
        \end{tikzpicture}
        \caption{Example metric spaces for the lower bounds in \Cref{thm:if-pf-tightness}. Edges without labels have length $1$.}
        \label{fig:if-pf-tightness}
    \end{figure}

    First, for the example for proportional fairness and $N=C$, consider the instance (a) with $N$ consisting of 6 agents, and $k = 2$.
    There the outcome $\{2, 3\}$ satisfies $\alpha$-proportional fairness, since the only deviating coalition could be $4, 5, 6$ and by deviating to $5$ they could at most improve by a factor of $\alpha$, if they would deviate to $4$ it would be at most by a factor of $\frac{\alpha + 1}{2} \le \alpha$. Further, due to $5$ the instance is also only $\alpha + 1$-individual fair. 

    For the example for individual fairness and $N=C$, consider the instance (b) with 6 agents, and $k=2$.
    The outcome $\{w_1, w_2\}$ satisfies $\beta$-individual fairness as $d(c, W) \le \beta r(c)$.
    Note that $d(i, W) = 1+\beta$ while $r(i) = 2$ for each $i \in \{1, 2, 3\}$.
    However, for each $i \in \{1, 2, 3\}$, we have $d(i, c) = 1 \not < \frac{1}{1+\beta}d(i, W)$; thus the outcome does not satisfy $(1+\beta - \varepsilon)$-proportional fairness.

    For the example for individual fairness and $N \subseteq C$, consider the instance (c) with agent set $N = \{1, 2, 3, 4\}$, candidate set $C = N \cup \{c, w\}$, and $k=1$.
    There, the outcome $\{w\}$ satisfies $\beta$-individual fairness as for each $i \in N$, the smallest radius containing $\frac{n}{k}=4$ agents is $2$, and the distance to $w$ is $2\beta$.
    However, for each $i \in N$, $d(i, c) = 1 \not < \frac{1}{2\beta}d(i, W)$; thus it does not satisfy $(2\beta-\varepsilon)$-proportional fairness.
\end{proof}
} \proofIFPFtight

\subsection{Proportional fairness and the transferable core}

It is easy to see that the $(1, \alpha)$-transferable core implies $\alpha$-proportional fairness.
For $\gamma > 1$ however, the $(\gamma, \alpha)$-transferable core does not imply any meaningful proportional fairness approximation (consider $\frac{n}{k}$ agents on one point and $(\gamma - 1) \frac{n}{k}$ agents ``far'' away).
Hence, we focus on the other direction and show that a proportional fairness approximation implies one to the transferable core.
\begin{theorem}[
	label=thm:PF-TC,
	restate=PFTC]
    An outcome satisfying $\alpha$-proportional fairness is in the $(\gamma, \frac{\gamma(\alpha+1)}{\gamma-1})$-transferable core for any $\alpha\ge 1$ and $\gamma > 1$. 
\end{theorem}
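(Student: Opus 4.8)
The plan is to argue by contraposition against a violation of the transferable core. Set $\beta = \frac{\gamma(\alpha+1)}{\gamma-1}$ and suppose $W$ is $\alpha$-proportionally fair but, for contradiction, there is a group $N'\subseteq N$ with $|N'|\ge\gamma\frac nk$ and a candidate $c\notin W$ such that $\beta\sum_{i\in N'}d(i,c)<\sum_{i\in N'}d(i,W)$; note that $\alpha$-proportional fairness forces $W\neq\emptyset$ (otherwise $N$ itself deviates to any available candidate), so all these distances are finite. I would split $N'$ into the set $G=\{i\in N':\alpha d(i,c)<d(i,W)\}$ of agents that would individually like to move to $c$, and $B=N'\setminus G$.

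The first observation is that $G$ cannot be a valid deviating coalition for $\alpha$-proportional fairness (the candidate $c$ is available since $c\notin W$), hence $|G|<\frac nk$, and therefore $|B|=|N'|-|G|>(\gamma-1)\frac nk>0$; in particular $B\neq\emptyset$. I then bound $\sum_{i\in N'}d(i,W)$ piecewise. For $i\in B$ we have $d(i,W)\le\alpha d(i,c)$ directly, so $\sum_{i\in B}d(i,W)\le\alpha\sum_{i\in N'}d(i,c)$. For $i\in G$ the triangle inequality gives $d(i,W)\le d(i,c)+d(c,W)$, so the only quantity left to control is the single term $d(c,W)$. Anchoring $c$ to the outcome through the agent $j\in B$ that is closest to $c$ gives $d(c,W)\le d(c,j)+d(j,W)\le(1+\alpha)d(j,c)\le(1+\alpha)\frac{1}{|B|}\sum_{j'\in B}d(j',c)$, the last step being the min-at-most-average inequality. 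Summing the $G$-bound over $i\in G$ and using $|G|<\frac nk$ together with $|B|>(\gamma-1)\frac nk$ yields $\sum_{i\in G}d(i,W)\le\sum_{i\in N'}d(i,c)+\frac{1+\alpha}{\gamma-1}\sum_{i\in N'}d(i,c)$. Adding the two contributions and simplifying $\alpha+1+\frac{\alpha+1}{\gamma-1}=\frac{\gamma(\alpha+1)}{\gamma-1}=\beta$ gives $\sum_{i\in N'}d(i,W)\le\beta\sum_{i\in N'}d(i,c)$, contradicting the assumed strict inequality.

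I expect the main difficulty to be the estimate on $d(c,W)$: one must notice that $B$ is not merely nonempty but has size exceeding $(\gamma-1)\frac nk$, so that the (fewer than $\frac nk$) copies of $d(c,W)$ appearing in the $G$-sum can be amortized against the full sum $\sum_{i\in N'}d(i,c)$ via the closest-to-$c$ agent of $B$; the factor $\frac1{\gamma-1}$ in $\beta$ is exactly what this amortization produces, and this is where the hypothesis $\gamma>1$ enters. The only other thing to check is the degenerate case $\sum_{i\in B}d(i,c)=0$, which forces $G=\emptyset$ and $\sum_{i\in N'}d(i,W)=0$, so it is already covered by keeping the chain of inequalities non-strict.
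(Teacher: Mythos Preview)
Your argument is correct, and all the steps go through as written (including the degenerate case you flag). However, the route is genuinely different from the paper's. The paper orders the agents of $N'$ as $i_1,\dots,i_{n'}$ by increasing distance to $c$ and then, for $\lambda=0,\dots,n'-\eta$ with $\eta=\lceil n/k\rceil$, applies $\alpha$-proportional fairness to a sliding window $J_\lambda\subseteq\{i_1,\dots,i_{\eta+\lambda}\}$ of size exactly $\eta$ to peel off one agent $j_\lambda$ with $d(j_\lambda,W)\le\alpha\,d(i_{\eta+\lambda},c)$; the remaining $\eta-1$ agents are routed through $j_0$ and amortized against the tail $i_\eta,\dots,i_{n'}$ of the ordered sequence. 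You instead make a \emph{single} invocation of $\alpha$-proportional fairness (to get $|G|<n/k$), anchor $c$ to $W$ through the closest member of $B$, and amortize via min-at-most-average over $B$.

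Your version is shorter and more elementary for this particular statement. The advantage of the paper's sliding-window construction is modularity: it is reused verbatim to obtain the transferable-core bounds under rank-JR and UPRF (by replacing $\alpha$ with $1$ or $2$, since those axioms guarantee a ``good'' agent in every window of size $\eta$ even when no individual-improvement set $G$ is available), and it extends cleanly to the $q$-transferable core. Your $G$/$B$ partition does not transfer to those settings, because rank-JR and UPRF do not directly bound the size of $\{i:\alpha\,d(i,c)<d(i,W)\}$.
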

\newcommand{\proofPFTC}{
\begin{proof}
    Let $W \subseteq C$ satisfy $\alpha$-proportional fairness.
    Let $N' \subseteq N$ be a group of size $n' \ge \gamma \frac{n}{k}$, $c \notin W$, and shorten $\eta = \lceil \frac{n}{k} \rceil$.
    Further, the agents $N' = \{i_1, \dots, i_{n'}\}$ are ordered by their increasing distance to $c$, i.e., $d(i_j, c) \le d(i_{j+1}, c)$ for every $j \in [n'-1]$.
    Let $J_0 = \{i_1, \dots, i_\eta\}$
    and $j_0 \in J_0$ such that $d(j_0, W) \le \alpha d(j_0, c)$; such an agent must exist due to $\alpha$-proportional fairness.
    Next, for $\lambda = 1, \dots, n'-\eta$, we inductively define $J_\lambda = \{i_1, \dots, i_{\eta+\lambda}\} \setminus \{j_0, \dots, j_{\lambda-1}\}$,
    and choose $j_\lambda \in J_\lambda$ such that $d(j_\lambda, W) \le \alpha d(j_\lambda, c) \le \alpha d(i_{\eta + \lambda}, c) $ (note that $|J_\lambda| = \eta$).
    Thus, 
    \begin{equation}
	    \label{eq:jlambda}
	    \textstyle \sum_{\lambda = 0}^{n'-\eta} d(j_\lambda, W) \le \alpha \sum_{z=\eta}^{n'} d(i_z, c) \le \alpha \sum_{i \in N'} d(i, c).
    \end{equation}
    Next, for each $i \in N'' = N' \setminus \{j_0, \dots, j_{n'-\eta}\}$,
    we can bound the distance to $W$ as follows:
    \begin{equation}
        \label{eq:TC1}
	\begin{aligned}
		d(i, W) &\le d(i, c) + d(c, j_0) + d(j_0, W) \le d(i, c) + (1+\alpha) d(i_\eta, c).
	\end{aligned}
    \end{equation}
    Note that $d(i_\eta, c) \le \frac{1}{n'-|N''|}\sum_{z=\eta}^{n'} d(i_z, c)$
    as each of the summands is at least $d(i_\eta, c)$.
    Thus,
    \begin{equation}
	    \label{eq:iz}
	    \begin{aligned}
	    &\textstyle\sum\limits_{i \in N''}\! d(i, W) \le \!\!\!\sum\limits_{i \in N''}\!\! \big( d(i, c) + (1+\alpha)d(i_\eta, c) \big)
		    \le{} \textstyle (1+\alpha)\frac{|N''|}{n'-|N''|} \!\sum\limits_{z=\eta}^{n'}\! d(i_z, c) + \!\!\!\sum\limits_{i \in N''}\!\! d(i, c).
	    \end{aligned}
    \end{equation}
    As $n' \ge \gamma\frac{n}{k}$ and $|N''| = \eta-1 \le \frac{n}{k}$, we have $\frac{|N''|}{n'-|N''|} \le \frac{1}{\gamma - 1}$.
    In all, $\sum_{i \in N'} d(i, W)$ is the sum of \eqref{eq:jlambda} and \eqref{eq:iz}, which is at most
    \begin{equation}
        \label{eq:TC2}
        \textstyle \big(\frac{\alpha+1}{\gamma-1} + \alpha + 1\big) \sum\limits_{i \in N'} d(i, c) = \frac{\gamma(\alpha+1)}{\gamma-1} \sum\limits_{i \in N'} d(i, c).
    \end{equation}
    Thus, $W$ is in the $(\gamma, \frac{\gamma(\alpha+1)}{\gamma-1})$-transferable core.
\end{proof}
} \proofPFTC

We remark that the denominator $\frac{1}{\gamma-1}$ in $\alpha$ is inevitable. This is because for $\gamma \le 2$, the $(\gamma, \frac{1}{\gamma-1})$-transferable core may be non-empty \cite[Theorem~18]{LLS+21a}.
We complement the above upper bound with an asymptotically tight lower bound. 

\begin{theorem}[
	label=thm:lb:tq,
	restate=lbtq
	]
    For any $\alpha \ge 1$, $\gamma > 1$, and $\varepsilon > 0$ there exists an instance in which an $\alpha$-proportional fair outcome is not in the $(\gamma, \frac{\gamma \alpha +1}{\gamma - 1} - \varepsilon$)- transferable core.
\end{theorem}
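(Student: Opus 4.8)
The plan is to build, for a large integer parameter $t$, an instance on the real line whose (essentially unique) outcome is $\alpha$-proportionally fair yet, in the limit $t\to\infty$, attains ratio $\tfrac{\gamma\alpha+1}{\gamma-1}$ between the total distance to the outcome and the total distance to a deviation target, over a coalition of size $\gamma\tfrac{n}{k}$. This mirrors the slack in the proof of \cref{thm:PF-IF-TC}: $\alpha$-proportional fairness only forces that, among the $\lceil n/k\rceil$ agents closest to a would-be target $c$, \emph{one} lies within factor $\alpha$ of $c$; the remaining $\approx n/k$ of them may sit essentially on top of $c$ while being, by the triangle inequality, a factor $1+\alpha$ away from the outcome. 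Concretely I would take $\mathcal X=\mathbb R$, set $c=0$, $w=1+\alpha$, $C=\{c,w\}$, $W=\{w\}$; fix a large integer $t$, put $k=\lceil\gamma\rceil+1$ and $n=kt$ (so $n/k=t$), and place $t-1$ ``bad'' agents at $0$, $m:=\lceil(\gamma-1)t\rceil+1$ ``good'' agents at $1$, and the remaining $n-(t-1)-m\ge0$ ``padding'' agents at $1+\alpha$. Then each bad agent $i$ has $d(i,c)=0$ and $d(i,W)=1+\alpha$, each good agent has $d(i,c)=1$ and $d(i,W)=\alpha$, and each padding agent has $d(i,W)=0$.

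First I would verify that $W$ is $\alpha$-proportionally fair. The only candidate outside $W$ is $c$, and any coalition deviating to $c$ must have $\alpha\, d(i,c)<d(i,W)$ for each member; this holds for bad agents ($0<1+\alpha$) but fails for good agents ($\alpha\not<\alpha$) and padding agents ($\alpha(1+\alpha)\not<0$), so any deviating coalition lies inside the set of bad agents, which has size $t-1<t=n/k$ --- too small. Next I would exhibit the core violation: take $N'$ to consist of all bad and all good agents and keep $c$ as the target. Then $|N'|=m+(t-1)\ge\gamma t=\gamma\tfrac{n}{k}$, and
\[
\sum_{i\in N'}d(i,c)=m,\qquad \sum_{i\in N'}d(i,W)=(t-1)(1+\alpha)+m\alpha ,
\]
so the ratio of these two sums equals $\alpha+\frac{(t-1)(1+\alpha)}{m}$, which tends to $\alpha+\frac{1+\alpha}{\gamma-1}=\frac{\gamma\alpha+1}{\gamma-1}$ as $t\to\infty$. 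Choosing $t$ large enough that this ratio exceeds $\frac{\gamma\alpha+1}{\gamma-1}-\varepsilon$ yields $\bigl(\frac{\gamma\alpha+1}{\gamma-1}-\varepsilon\bigr)\sum_{i\in N'}d(i,c)<\sum_{i\in N'}d(i,W)$, i.e.\ $W$ is not in the $(\gamma,\frac{\gamma\alpha+1}{\gamma-1}-\varepsilon)$-transferable core.

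The only delicate point is the integer bookkeeping: for arbitrary real $\gamma>1$ and $\alpha\ge1$ the three agent counts must be made consistent with $n=kt$, with the number $t-1$ of bad agents strictly below $n/k$ (so that $\alpha$-proportional fairness survives), and with $m+(t-1)\ge\gamma(n/k)$ (so that $N'$ is an admissible coalition). The ceilings above and the choice $k=\lceil\gamma\rceil+1$ are exactly what make this work; in particular $n-(t-1)-m=\lceil\gamma\rceil t-\lceil(\gamma-1)t\rceil\ge0$ since $\lceil\gamma\rceil t$ is an integer larger than $(\gamma-1)t$. Past this bookkeeping the proof is a short computation, so I do not foresee a substantial obstacle. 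If an instance with $N=C$ were preferred, one could instead use $t-2$ bad agents and a metric in which the $m$ good-agent positions are pairwise at distance $1$, so that any coalition deviating to a single good position recruits at most $t-1<n/k$ agents; the limiting ratio is unchanged.
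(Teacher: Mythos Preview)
Your construction is correct and is essentially the same as the paper's: both place $\lfloor n/k\rfloor$-ish agents at distance $0$ from the deviation target $c$, the remaining coalition members at distance $1$ from $c$ and distance $\alpha$ from the single center, and let $n/k\to\infty$ to drive the ratio to $\alpha+\frac{\alpha+1}{\gamma-1}=\frac{\gamma\alpha+1}{\gamma-1}$. Your version is in fact cleaner about the integer bookkeeping (the paper leaves $n,k$ implicit and just lets $k/n\to0$), and your placement of padding agents directly on the center rather than among the distance-$1$ agents is an inessential variation.
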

\newcommand{\prooflbtq}{
\begin{proof}
    Consider an instance with a candidate $c$ in which there is a set $N_0$ of $\lceil \frac{n}{k} - 1\rceil$ agents at distance $0$ to $c$, while the rest of the agents are at distance $1$ to $c$.
    Our outcome $W$ contains one cluster center $c_1$ at distance $\alpha$ to every agent \emph{not} in $N_0$.
    It is easy to see that this is indeed $\alpha$-proportional, the agents could only deviate to $c$ and improve by at most a factor of $\alpha$.
    Further, we get that $\sum_{i \in N'} d(i, W) = (\lceil \frac{n}{k} - 1\rceil) (\alpha + 1) + (\gamma \frac{n}{k} - \lceil \frac{n}{k} - 1\rceil)\alpha$ while $\sum_{i \in N'} d(i,c) = (\gamma \frac{n}{k} - \lceil \frac{n}{k} - 1\rceil)$.
    Thus, in this instance 
    \begin{align*}
        \frac{\sum_{i \in N'} d(i, W)}{\sum_{i \in N'} d(i,c)} &= \frac{(\lceil \frac{n}{k} - 1\rceil) (\alpha + 1) + (\gamma \frac{n}{k} - \lceil \frac{n}{k} - 1\rceil)\alpha}{\gamma \frac{n}{k} - \lceil \frac{n}{k} - 1\rceil} \\
                                                               &= \alpha + \frac{(\lceil \frac{n}{k} - 1\rceil) (\alpha + 1)}{\gamma \frac{n}{k} - \lceil \frac{n}{k} - 1\rceil}
                                                               \ge \alpha + \frac{( \frac{n}{k} - 1) (\alpha + 1)}{\gamma \frac{n}{k} -  \frac{n}{k} + 1}
                                                               \ge \alpha + \frac{(\alpha + 1)(\frac{n}{k}) - (\alpha + 1) }{(\gamma - 1) (\frac{n}{k}) + 1}\\
                                                               &= \alpha + \frac{(\alpha + 1) - \frac{k}{n}(\alpha + 1) }{(\gamma - 1) + \frac{k}{n}}.
    \end{align*}
     As $\frac{k}{n}$ goes to $0$ this goes to $\alpha + \frac{\alpha + 1}{\gamma - 1} = \frac{\gamma \alpha +1}{\gamma - 1}$.
\end{proof}
} \prooflbtq

\section{Fairness notions multiwinner voting axioms}
\label{sec:jrpjr}
In this section we show a connection between the research on computational social choice, specifically approval-based multiwinner voting (also known as approval-based committe (ABC) voting) and the fairness notions for clustering.
We will first give a primer on ABC voting and introduce our \emph{metric JR axioms}.
We then focus on two of those axioms
and show that
(1) they are satisfied by existing, simple algorithms, and
(2) they imply the best known approximation guarantees to proportional and individual fairness, the transferable core, and the $q$-core (see \cref{def:core} below).
For the latter two notions, we are even able to improve upon the best currently known approximation guarantees.
Finally, we will focus on the case when the candidate set is infinitely large (i.e., when we are in the Euclidean space and every point is a candidate):
In this setting, the above algorithms become hard to compute.
We combine two approaches to maneuver around this hardness and again match upon the best known approximation guarantees for proportional fairness and the $q$-core.

\subsection{Metric JR axioms}
\label{ssec:jr-axioms}
In ABC voting \citep{LaSk22a}, we are given a set $N$ of voters (or agents), a set $C$ of candidates, and a committee size $k$.
For each voter $i \in N$ we are given a subset $A_i \subseteq C$ of candidates they approve.
For such preferences, we call a set $N' \subseteq N$ of voters
\emph{$\ell$-large} if $|N'| \ge \ell \frac{n}{k}$, and
\emph{$\ell$-cohesive} if $\lvert \bigcap_{i \in N'} A_i \rvert \ge \ell$.
We say that a committee satisfies
\begin{compactitem}
    \item[JR] if for every $1$-cohesive and $1$-large group $N'$ there exists an $i \in N'$ with $\lvert A_i \cap W \rvert \ge 1$;
    \item[PJR] if for every $\ell \in [k]$ and $\ell$-cohesive and $\ell$-large group $N'$ it holds that $\lvert \bigcup_{i \in N'} A_i \cap W \rvert \ge \ell$,
\end{compactitem}
and remark that there are many further proportionality axioms \citep{LaSk22a}.
Here, JR is short for \emph{justified representation}.
To define our \emph{metric JR axioms} for voters and candidates in a distance metric, we follow \citet{BrPe23a} (who lifted these axioms for weak ordinal preferences and called them rank-$\Pi$) and generalize their notions to look at each distance separately.
\begin{definition}[Metric JR axioms]
	\label{def:rank-pi}
	Let $(\mathcal X, d)$ be a distance metric, let $N, C \subseteq \mathcal X$.
	Let $\Pi$ be a proportionality axiom.
	An outcome $W$ satisfies \emph{m$\Pi$} (short for metric) if for all $y \in \mathbb R_{\ge 0}$,
	for the ABC voting instance in which each $i \in N$ has the approval set $B(i, y) \cap C$,
	the outcome $W$ satisfies $\Pi$.
\end{definition}

For example,
an outcome $W$ satisfies \rjr if for every $y \in \mathbb R$ and for every group $N' \subseteq N$ of at least $\frac{n}{k}$ agents whose ball of radius $y$ all contain a common candidate ($|\bigcap_{i \in N'} B(i, y) \cap C| \ge 1$), there exists an agent $i \in N'$ whose ball of radius $y$ contains a center $c \in W$.

We want to point out that \rjr is \emph{significantly} weaker than \rpjr.
Indeed, to satisfy \rpjr, an outcome $W$ may need to contain several candidates $c$ such that $d(i, c) > d(i, W)$ for all agents $i \in N$, i.e., $c$ is no-ones ``first choice'' among $W$; \rjr does not have this property.
This makes \rjr the more sensible of the two axioms for ``vanilla'' clustering, in which one only cares about the closest center to each agent.
\rpjr however, is a natural axiomatic choice for settings such as sortition or even social choice in general: Here, agents may benefit from having more than a single representative.
We provide some intuition for \rjr and \rpjr and this property in the example below.

\newcommand{\exDefJR}{
\begin{example}
	\label{ex:jr}
	To see the differences between the proportionality axioms, consider the instance depicted in \Cref{fig:results} (right).
	First consider instance (a) on the left with $N=C=\{1, \dots, 10\}$, $k = 4$, and the outcome $W = \{1,2,3,6\}$.
	Here, $\frac{n}{k} = 2.5$.
	First, we note that this outcome does not satisfy $1$-proportional fairness:
	The agents $8, 9, 10$ are closer to $9$ than they are to the closest winner in $W$.
	It does however satisfy \rjr: Among every group of at least three agents that have a common candidate within distance $y$, there is one agent that has a cluster center $w \in W$ within distance $y$.
	For example, $8, 9, 10$ have candidate $9$ at distance $1$, and the distance of $9$ to the closest center is also $1$.
	This outcome does not satisfy \rpjr though, since the group $5, \dots, 10$ would deserve at least two candidates within distance $1$ in $W$. 
	An outcome satisfying \rpjr is $W = \{1, 2, 3, 9\}$.
	For $y = 0$, only the group $\{1, \dots, 4\}$ shares a candidate, but also have a center at distance $0$.

	If $k=5$, then, to satisfy \rpjr, an outcome must contain at least two of the four candidates.
	But there are outcomes satisfying \rjr that contain only one of $1, \dots, 4$.
	This property of \rpjr makes it suited for settings in which agents may want to be represented by multiple candidates, e.g., in political settings, in which the candidates end up possessing voting power to represent the agents.
\end{example}
} \exDefJR

Independently of \citet{BrPe23a}, \citet{ALM23a} introduced two notions they call \emph{Proportionally Representative Fairness}.
The first notion is called ``discrete'' (DPRF), and the second is called ``unconstrained'' (UPRF).
Indeed, DPRF is equivalent to \rpjr.
UPRF was introduced to tackle the case when the candidate space is unbounded.
We discuss how it relates to \rpjr and the other fairness notions in \cref{app:uprf}.

\citet{ALM23a} show that an outcome satisfying DPRF (\rpjr) also fulfills $(1+\sqrt{2})$-proportional fairness.
We show hereafter that this already holds for the (much weaker) \rjr axiom.

\subsection{Fairness bounds for \rjr outcomes}
\label{ssec:rjr}

We now prove the approximation guarantees implied by \rjr.
Indeed, the bound for the transferable core below improves upon the analysis of \citet{LLS+21a}.

\begin{theorem}[
	label=thm:JR-PF-IF-TC,
	restate=JRPFIFTC
	]
	Let $W$ be an outcome satisfying \rjr.
	Then it also satisfies $(1 + \sqrt{2})$-proportional fairness, $2$-individual fairness, and is in the $(\gamma, \frac{2\gamma}{\gamma-1})$-transferable core for any $\gamma > 1$.
\end{theorem}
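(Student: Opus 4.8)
The plan is to prove the three bounds separately, each via a short contradiction argument that invokes rank-JR (\cref{def:rank-pi}) at a single suitably chosen radius. For proportional fairness, suppose $W$ violates $(1+\sqrt 2)$-proportional fairness, witnessed by $N' \subseteq N$ with $|N'| \ge \frac{n}{k}$ and $c \notin W$ such that $d(i,W) > (1+\sqrt 2)\,d(i,c)$ for all $i \in N'$. Let $i^* = \argmax_{i \in N'} d(i,c)$ and set $r = d(i^*,c)$. Since $d(i,c) \le r$ for every $i \in N'$, the candidate $c$ lies in $\bigcap_{i \in N'} B(i,r)\cap C$, so rank-JR at radius $r$ yields some $i_0 \in N'$ with $d(i_0,W) \le r$. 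The deviation condition for $i_0$ then forces $d(i_0,c) < \frac{r}{1+\sqrt 2} = (\sqrt 2 - 1)\,r$, and the triangle inequality gives $d(i^*,W) \le d(i^*,c) + d(c,i_0) + d(i_0,W) < r + (\sqrt 2 - 1)r + r = (1+\sqrt 2)\,d(i^*,c)$, contradicting the deviation condition for $i^*$. (The constant is exactly the positive root of $\alpha^2 - 2\alpha - 1 = 0$, so this argument wastes no slack.)

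For individual fairness, fix an agent $i$, let $r = r_{N,k}(i)$, and take $N' = B(i,r)\cap N$, which has $|N'| \ge \frac{n}{k}$ by definition of $r_{N,k}$. Because $N \subseteq C$, the agent $i$ is itself a candidate, and $i \in B(j,r)$ for every $j \in N'$; hence rank-JR at radius $r$ produces $j_0 \in N'$ with $d(j_0,W) \le r$, and so $d(i,W) \le d(i,j_0) + d(j_0,W) \le 2r = 2\,r_{N,k}(i)$.

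For the transferable core, I would rerun the proof of \cref{thm:PF-IF-TC} almost verbatim, replacing every appeal to $\alpha$-proportional fairness by an appeal to rank-JR. Ordering $N' = \{i_1,\dots,i_{n'}\}$ by increasing distance to $c$ and writing $\eta = \lceil\frac{n}{k}\rceil$, the set $J_\lambda = \{i_1,\dots,i_{\eta+\lambda}\}\setminus\{j_0,\dots,j_{\lambda-1}\}$ has exactly $\eta$ agents, all contained in $B(c,d(i_{\eta+\lambda},c))$; rank-JR thus supplies $j_\lambda \in J_\lambda$ with $d(j_\lambda,W) \le d(i_{\eta+\lambda},c)$ --- precisely the ``$\alpha = 1$'' form of the peeling step in that proof. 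Consequently the leftover agents $i \in N'' := N'\setminus\{j_0,\dots,j_{n'-\eta}\}$ obey $d(i,W) \le d(i,c) + 2\,d(i_\eta,c)$, and feeding this into the same averaging estimate (using $|N''| = \eta - 1 \le \frac{n}{k}$ and $n' \ge \gamma\frac{n}{k}$, so that $\frac{|N''|}{n'-|N''|} \le \frac{1}{\gamma-1}$, together with $d(i_\eta,c) \le \frac{1}{n'-|N''|}\sum_{z=\eta}^{n'} d(i_z,c)$) yields $\sum_{i\in N'} d(i,W) \le \big(2 + \frac{2}{\gamma-1}\big)\sum_{i\in N'} d(i,c) = \frac{2\gamma}{\gamma-1}\sum_{i\in N'} d(i,c)$, i.e.\ membership in the $(\gamma,\frac{2\gamma}{\gamma-1})$-transferable core. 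This is strictly better than the $\frac{\gamma(2+\sqrt 2)}{\gamma-1}$ bound one would get by composing the first part with \cref{thm:PF-IF-TC}.

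The first two parts are immediate once the right common candidate is identified --- the deviation target $c$ for proportional fairness, and the agent $i$ itself for individual fairness, which is where $N \subseteq C$ enters. The only genuine bookkeeping lies in the transferable-core part: tracking the peeling indices $j_0,\dots,j_{n'-\eta}$ and checking $|N''| \le \frac{n}{k}$. Since that argument is a copy of the proof of \cref{thm:PF-IF-TC} specialized to ``$\alpha = 1$'', I do not expect it to introduce any new difficulty.
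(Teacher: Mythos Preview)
Your proposal is correct and follows essentially the same approach as the paper: in all three parts you identify the same common candidate ($c$, the agent $i$ itself, and $c$ again), invoke rank-JR at the same radius, and for the transferable core you rerun the proof of \cref{thm:PF-IF-TC} with $\alpha=1$ exactly as the paper does. The only cosmetic difference is in the proportional-fairness part, where the paper phrases the final step as a $\min$ of two ratios bounded by $\max_{x\ge 0}\min(2+x,\,1/x)=1+\sqrt{2}$, whereas you unwind the same inequalities into a direct contradiction; the content is identical.
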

\newcommand{\proofJRPFIFTC}{
\begin{proof}
    For proportional fairness, our proof follows the lines of \citet[Theorem~1]{CFL+19a}.
    Let $N' \subseteq N$ be a group of size at least $\frac{n}{k}$ and let $c \notin W$ such that $d(i', c) \le d(i', W)$ for all $i' \in N'$. Let $i_1 = \argmax_{i \in N'} d(i, c)$ and let $y = d(i_1, c)$.
    Then $c \in \bigcap_{i \in N'} B(i, y) \cap C$,
    and \rjr implies the existence of an agent $i_2 \in N'$ and a candidate $c' \in W$
    such that $c' \in B(i_2, y) \cap W$.
    Then we have $d(i_2, W) \le d(i_2, c') \le d(i_1, c)$.
    Further, $d(i_1, W) \le d(i_1, c) + d(c, i_2) + d(i_2, W) \le 2 d(i_1, c) + d(i_2, c)$.
    Thus, we can apply \cref{lem:q-core-general} with $q=1$, $C' = \{c\}$, $\rho_1 = 2$, $\rho_2 = 2$, $\sigma_1 = 1$, and $\sigma_2 = 0$, which proves that $W$ is in the $(1+\sqrt{2})$-$1$-core; this is equivalent to satisfying $(1+\sqrt{2})$-proportional fairness.

    For individual fairness, we again assume that $N \subseteq C$.
    Let $i \in N$ be any agent and let $N' \subseteq N$ be a group of at least $\frac{n}{k}$ agents closest to $i$ (including $i$).
    Let $y = \max_{i' \in N'} d(i, i')$.
    Then $i \in \bigcap_{i' \in N'} B(i, y) \cap C$,
    and \rjr implies the existence of an agent $j \in N'$ and a candidate $c' \in W$
    such that $c' \in B(i, y) \cap W$.
    By the triangle inequality, $d(i, W) \le d(i, j) + d(j, W) \le 2 \max_{i' \in N'} d(i, i')$;
    thus $W$ satisfies $2$-individual fairness.

    For the transferable core, let $N' \subseteq N$ be a group of size $n' \ge \gamma \frac{n}{k}$ and let $c \notin W$.
    Let $\eta = \lceil \frac{n}{k} \rceil$.
    We can now use the same argumentation as in the proof of \Cref{thm:PF-TC}:
    Again, we order the agents $i_1, \dots, i_{n'}$ in $N'$ by their increasing distance to $c$.
    We choose the elements $j_\lambda \in J_\lambda$, $\lambda = 0, \dots, n' - \eta$, such that $d(j_\lambda, W) \le d(i_{\eta+\lambda}, c)$;
    such a $j_\lambda$ always exists by \rjr, as $|J_\lambda| = \eta$.
    By replacing every occurrence of $\alpha$ by $1$ in the inequality in the proof of \Cref{thm:PF-TC}, we obtain the claimed bound.
\end{proof}
}
\proofJRPFIFTC

If the candidate space is finite, then an outcome satisfying \rjr can be computed in polynomial time by the \textsc{greedy capture} algorithm \citep{CFL+19a,MiSh20a,LLS+21a}.
We briefly recall its procedure:

\textsc{greedy capture} starts off with an empty clustering $W$.
It maintains a radius $\delta$ (initially $\delta=0$) and smoothly increases $\delta$.
If there is a candidate $c$ such that at least $\frac{n}{k}$ agents have distance at most $\delta$ to $c$, it adds $c$ to $W$ and deletes the $\frac{n}{k}$ agents.
If an agent has distance at most $\delta$ to a candidate in $W$, then it is deleted as well.
This is continued until all agents are deleted.

\newcommand{\exGC}{
\begin{example}
    \label{ex:gc}
    Consider the instance in \Cref{fig:example-pr}.
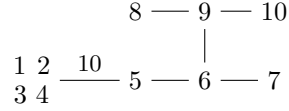
\begin{figure}
	\centering
	\begin{tikzpicture}[scale=0.92]
		\begin{scope}[yscale=-1]
			\node [align=center] (v1) at (.5,0.5) {$1\,\ 2$\\$3\,\,4$};

			\node (v4) at (2,0.5) {$5$};

			\node (v5) at (3,0.5) {$6$};

			\node (v6) at (4,0.5) {$7$};

			\node (v7) at (2,-0.5) {$8$};

			\node (v8) at (3,-0.5) {$9$};

			\node (v9) at (4,-0.5) {$10$};

			\draw (v1) -- node[above] {\footnotesize $10$} (v4);
			\draw (v4) -- (v5);
			\draw (v5) -- (v6);
			\draw (v5) -- (v8);
			\draw (v7) -- (v8);
			\draw (v8) -- (v9);
		\end{scope}
	\end{tikzpicture}
	\caption{Metric space for some of the examples. Edges without labels have length $1$.}
	\label{fig:example-pr}
\end{figure}
    Here, with $k = 4$, \textsc{greedy capture}, would first open one of $1,\dots, 4$ with $\delta = 0$ and remove all agents from $1, \dots, 4$.
    Then for $\delta = 1$ it could either open $6$ or $9$, removing all adjacent agents to it.
    Then there are two agents remaining, which would be assigned to either $6$ or $9$ for $\delta = 2$.
    Thus, in this instance, \textsc{greedy capture} only opens two clusters. 
\end{example}
} \exGC

\begin{proposition}[
	label=prop:gc-jr,
	restate=algsJR
	]
    Any outcome returned by \textnormal{\textsc{greedy capture}} satisfies \rjr.
\end{proposition}
\newcommand{\proofalgsJR}{
\begin{proof}
    Let $r \ge 0$. Then, after \textsc{greedy capture} reached $\delta = r$ there is no unchosen candidate $c \notin W$ with at least $\frac{n}{k}$ agents at a distance of at most $r$. Therefore, \textsc{greedy capture} satisfies \rjr. 
\end{proof}
} \proofalgsJR

\subsection{Fairness bounds for \rpjr outcomes}
\label{ssec:rpjr}

Recall that \rpjr is equivalent to DPRF.
Thus, as shown by \citet{ALM23a}, an outcome satisfying \rpjr always exists and can be computed in polynomial time for finite candidate spaces by the \textsc{expanding approvals} algorithm, which is also established in multiwinner voting \citep{AzLe20a, AzLe21a}.
The algorithm behaves similarly to \textsc{greedy capture}.
It starts off with an empty clustering $W$ and a radius $\delta=0$ as well
and additionally
gives each agent a budget $b_i = \frac{k}{n}$.
It then smoothly increases the radius $\delta$.
When there is a candidate $c \notin W$ for which the agents at a distance of at most $\delta$ have a budget of at least $1$, it decreases the budget of these agents collectively by exactly $1$ and adds $c$ to $W$.

\newcommand{\exEA}{
\begin{example}
	\label{ex:ea}
    Consider the instance in \Cref{fig:example-pr}.
    Here, with $k = 4$, \textsc{expanding approvals}, would give each agent a budget of $\frac{4}{10} = \frac{2}{5}$.
    For $\delta = 0$, it will open a cluster from $1, \dots, 4$ and decrease their budgets by exactly $1$, for instance it could set the budget of $1$ and $2$ to $0$ and of $3$ to $\frac{1}{5}$.
    Then for $\delta = 1$ it could again open $6$ and $9$, for instance by removing the budget of $6$ and $9$ to $\frac{1}{5}$ and of $5,7,8,10$ to zero.
    The remaining budget is exactly $1$, which would be spent for $\delta = 10$ on $5$.
    Thus, one possible final outcome is $\{1,5,6,9\}$.
\end{example}
} \exEA

Remarkably, it does not matter in which way the budget is subtracted and which candidate meeting the budget is selected; the outcomes of the algorithm will satisfy \rpjr in any~case.
\begin{proposition}[
	label=prop:ea-pjr,
	restate=algsPJR
	]
    Any outcome returned by \textnormal{\textsc{expanding approvals}} satisfies \rpjr.
\end{proposition}
\newcommand{\proofalgsPJR}{
\begin{proof}
	Let $r \ge 0$. Then, after \textsc{expanding approvals} reached $\delta = r$, the agents at a distance at most $r$ to a candidate $c$ must have a budget of less than $1$. Thus, if there are $\ell \frac{n}{k}$ agents at a distance at most $r$, they had a total budget of $\ell$ at the start of the algorithm and must have thus spent their money on at least $\ell$ candidate who are at a distance of at most $r$ to an agent in that group. Thus, \textsc{expanding approvals} satisfies \rpjr.
\end{proof}
} \proofalgsPJR
We now turn to fairness measures implied by \rpjr.
As any outcome satisfying \rpjr also fulfills \rjr, the results in \cref{thm:JR-PF-IF-TC} also hold for \rpjr.
Indeed, \rpjr is stricter in the sense that larger groups must also be represented justly by a proportional number of candidates:
an $\alpha$ percentage of the population should roughly be close to an $\alpha$ percentage of the centers.

This property makes \rpjr fit well into metric social choice settings such as sortition \citep{EbMi23a}.
They introduced a fairness notion that measures proportionality in this setting by considering for each agent not only the closest center, but the first $q$ closest centers.
In that, their notion called $\alpha$-$q$-core naturally generalizes $\alpha$-proportional fairness; the two are equal when $q=1$.

\begin{definition}
    \label{def:core}
    For $\alpha \ge 1$ an outcome $W$ is in the \emph{$\alpha$-$q$-core}, if there is no $\ell \in \mathbb{N}$ and no $N' \subseteq N$ with $|N'| \ge \ell \frac{n}{k}$ and set $C' \subseteq C$ with $q \le \lvert C' \rvert \le \ell$ such that $\alpha \cdot d^q(i, C') < d^q(i, W)$ for all $i \in N'$.
\end{definition}
\newcommand{\exDefqCore}{
\begin{example}
    \label{ex:qcore}
    Consider the instance in \cref{fig:example-pr} with $k=5$ and the outcome $W = \{1, 2, 3, 6, 9\}$.
    As mentioned above, $W$ satisfies $1$-proportional fairness.
    For the $3$-core however, consider the set $N' = \{5, \dots, 10\}$ deviating to $C' = \{6, 9, 10\}$.
    The distance of any member of $N'$ to $6$, $9$, or $10$ is at most $3$, while the distance to the third most distant center in the outcome is at least $10$.
    Thus, when considering the distances to the third most distant candidate in $C'$, every agent in $N'$ would improve by at least a factor of $\frac{10}{3}$.
    Thus, $W$ is only in the $\frac{10}{3}$-$3$-core.
\end{example}
} \exDefqCore

We mention in passing that we introduce similar gerneralizations for individual fairness and the transferable core, in which each agent is represented by $q$ candidates instead of one.
The definitions and obtained results can be found in \cref{app:q:gem}.

\citet{EbMi23a} show that, if $N=C$ (every agent is a candidate and vice versa), for a given $q$, one can compute a $\frac{5 + \sqrt{41}}{2}$-$q$-core outcome.\footnote{In addition, their randomized algorithm selects each agent with the same probability, a desirable property in the context of sortition, i.e., the randomized selection of citizen assemblies \citep{FGG+21a}.}
We show that \rpjr (or DPRF) provides a better guarantee for the $q$-core, for all values of $q$ simultaneously. 

\begin{theorem}[label=thm:rPJR,restate=rPJR]
    If an outcome satisfies \rpjr,
    then, for every $q \le k$, it is in the $5$-$q$-core.
\end{theorem}

To prove the theorem, we use two lemmas.
The first was first observed by \citet{EbMi23a} and is proven here in a shorter fashion.
\begin{lemma}[label=lem:pigeon,restate=pigeon]
	Let $\ell \ge q \ge 0$, let $N' \subseteq N$ be a set of agents with $N' \ge \ell\frac{n}{k}$,
	and let $C'$ be a set of $q \le |C'| \le \ell$ agents such that $d^q(i, C') \le d^q(i, W)$ for any $i \in N'$.
	Then there is a set $N''$ of at least $q \frac{n}{k}$ agents and a candidate $c \in C'$ such that $d(i, c) \le d^q(i, C')$ for all $i \in N''$. 
\end{lemma}
\newcommand{\proofpigeon}{
\begin{proof}
    Assume that each voter marks each of their top $q$ choices among $C'$.
    Then there are at least $q\lvert C'\rvert\frac{n}{k}$ marks on the candidates.
    Thus, there is one $c \in C'$ with at least $q\frac{n}{k}$ marks.
\end{proof}
} \proofpigeon

The next lemma bounds the $\alpha$-$q$-core once we find two agents with specific bounds on their distances.

\begin{lemma}[
	label=lem:q-core-general,
	restate=qCoreGeneral
	]
	Let $\rho_1, \rho_2, \sigma_1, \sigma_2 \ge 0$ and
	let $W \subseteq C$ be an outcome.
	If for any set $N' \subseteq N$ of at least $\ell\frac{n}{k}$ agents
	and any candidate set $C' \subseteq C$ with $q \le |C'| \le \ell$
	there are $i_1, i_2 \in N'$ such that
	$d^q(i_1, W) \le \rho_1 d^q(i_1, C') + \rho_2 d^q(i_2, C')$ and  $d^q(i_2, W) \le \sigma_1 d^q(i_1, C') + \sigma_2 d^q(i_2, C')$,
	then $W$ is in the $\alpha$-$q$-core, where $\alpha \le \rho_1 + \frac{1}{2}\big(\sigma_2 - \rho_1 + \sqrt{(\rho_1-\sigma_2)^2 + 4 \sigma_1\rho_2}\big)$.
\end{lemma}
\newcommand{\proofqCoreGeneral}{
\begin{proof}
	Note that $\alpha \le d^q(i, W)/d^q(i, C')$ for any $i \in N'$, if $\alpha$ is smaller, then $W$ is not in the $\alpha$-$q$-core.
	In particular, we have
	\begin{align*}
		\alpha &\le \min \left( \frac{d^q(i_1, W)}{d^q(i_1, C')}, \frac{d^q(i_1, W)}{d^q(i_2, C')} \right) %
		       \le \min \left( \rho_1 + \rho_2 \frac{d^q(i_2, C')}{d^q(i_1, C')}, \sigma_2 + \sigma_1 \frac{d^q(i_1, C')}{d^q(i_2, C')} \right)\\
		       &\le \max_{x \ge 0} \min \left( \rho_1 + \rho_2 x, \sigma_2 + \sigma_1 / x \right).
	\end{align*}
	As the first term is monotonically increasing in $x$ and the second term is monotonically decreasing in $x$ it suffices to find the maximum $x$ such that the two terms are equal.
	This is true for $x = \frac{1}{2\rho_2}(\sigma_2-\rho_1 + \sqrt{(\rho_1-\sigma_2)^2 + 4\sigma_1\rho_2})$ and yields the desired bound for $\alpha$.
\end{proof}
} \proofqCoreGeneral

\newcommand{\proofrPJR}{
\begin{proof}[Proof of \cref{thm:rPJR}]
	Let $N' \subseteq N$ be a group of agents with $\lvert N'\rvert \ge \ell\frac{n}{k}$ and let $C' \subseteq C$ with $q \le \lvert C' \rvert \le \ell$ such that $d^q(i, C') \le d^q(i, W)$ for any $i \in N'$.
	By \Cref{lem:pigeon} there is a candidate $c$ being ranked in their top $q$ among $C'$ by $q \frac{n}{k}$ many agents $N'' \subseteq N'$. 
	Out of $N''$, let $i_1$ be the agent maximizing $d(i_1, c)$ and $i_2$ be any other agent in $N''$.
	Also, let $C''\subseteq C'$ be the set of the $q$ candidates closest to $i_2$.
	Then, for every $c' \in C''$ and every $i \in N''$, we have \[d(i, c') \le d(i, c) + d(c, i_2) + d(i_2, c') \le d(i_1, c) + 2 d^q(i_2, C') \eqqcolon y.\]
	In other words, $|\bigcap_{i \in N''} B(i, y) \cap C| \ge q$.
	Now \rpjr implies that $|\bigcup_{i \in N''} B(i, y) \cap W| \ge q$.
	Thus, for every $i \in N''$ there is an agent $i' \in N''$ such that $d^q(i, W) \le d(i, i') + y$.
	Since the distance of $i_1$ to any other agent $i' \in N''$ is $d(i_1, i') \le d(i_1, c) + d(c, i_1)$, we have
	\[ d^q(i_1, W) \le 2d(i_1, c) + y \le  3 d^q(i_1, C') + 2 d^q(i_2, C'). \]
	As the distance of $i_2$ to any other agent $i' \in N''$ is at most $d(i', c) + d(c, i_2) \le d(i_1, c) + d^q(i_2, C')$,
	\[ d^q(i_2, W) \le d(i_1, c) + d^q(i_2, C') + y \le 2 d(i_1, c) + 3 d^q(i_2, C') \le 2 d^q(i_1, C') + 3 d^q(i_2, C'). \]
	Applying \cref{lem:q-core-general} with $\rho_1=\sigma_2=3$ and $\rho_2=\sigma_1=2$ yields the stated $5$-$q$-core.
\end{proof}
}  \proofrPJR

\subsection{Dealing with unbounded candidate sets}
\label{sec:polytime}
Whenever the candidate space $C$ is finite, it is straightforward to implement \textsc{greedy capture} and \textsc{expanding approvals} in polynomial time.  
However, as shown by \citet{MiSh20a}, once $C$ is unbounded and the metric space is only implicitly given (e.g., some distance norm over $C = \mathbb R^t$), computing \textsc{greedy capture} can become NP-hard.
For Euclidean distances over $C = \mathbb R^t$, \citet[Theorem~12]{MiSh20a} were nevertheless able to give an approximate version of \textsc{greedy capture}, which approximates proportional fairness up to a factor of $2 + \varepsilon$ for any $\varepsilon > 0$ in this special metric space.
For general metric spaces, \citet[Theorem~11]{MiSh20a} show that in an instance with $N \subseteq C$, any outcome which is $\alpha$-proportionally fair when restricted to the instance with candidate set $N$ is $2 \alpha$-proportionally fair in the whole instance.
\citet{ALM23a} used a very similar approach to this and showed that running \textsc{expanding approvals} on the agents results in a $3$-proportionally fair outcome.
Combining both approaches, we show that any outcome $W$ satisfying \rjr when restricted to the instance with candidate set $N \cup W$ satisfies $3$-proportional fairness in the entire instance.
Thus, \textsc{greedy capture} restricted to the agents yields a $3$-proportionally fair outcome.
The same also applies to \rpjr and the $q$-core. 
\begin{theorem}[
	label=thm:polytime,
	restate=polytime
	]
    Consider an instance $I$ with $N \subseteq C$ and an outcome $W$ and let $I'$ be the instance with agent set $N$ and candidate set $N \cup W$.
    If $W$ satisfies \rjr in $I'$, then $W$ satisfies $3$-proportional fairness.
    If $W$ satisfies \rpjr in $I'$, then $W$ is in the $4$-$q$-core for all $q \le k$.
\end{theorem}
\newcommand{\proofpolytime}{
\begin{proof}
    First consider a group $N'$ deviating to a single candidate $c \in C$ and assume that $W$ satisfies \rjr in $I'$.
    Let $i = \argmin_{i \in N'} d(i,c)$ and $i^* = \argmax_{i^* \in N'} d(i^*, c)$. Then, for any $i' \in N'$,
    \[ d(i,i') \le d(i, c) + d(c, i') \le d(i,c) + d(i^*, c) \eqqcolon y. \] 
    Now, as $W$ satisfies \rjr in $I'$, there must exist a $c' \in W$ and $i'' \in N'$ with $d(i'',c') \le  d(i,c) + d(i^*, c) \le d(i'',c) + d(i^*, c)$. Further, since $d(i*, W) \le d(i^*, c) + d(i'', c) + d(i'', W) \le 2d(i,c) + 2d(i^*, c)$ we can apply \Cref{lem:q-core-general} with $\rho_1 = 2, \rho_2 = 2, \sigma_1 = 1, \sigma_2 = 1$ and obtain that $W$ satisfies $3$ proportional fairness.

    For the $q$-core and \rpjr, we can again apply \Cref{lem:pigeon} and obtain that there must again be a candidate $c$ and a set $N''$ of at least $q\frac{n}{k}$ ranking $c$ among their top $q$ choices. Further, again, let $i_1$ be the agent maximizing $d(i_1, c)$ and $i_2$ be the agent minimizing $d^q(i_2, C')$.
    Now, for every $i, i' \in N''$ we have $d(i', i) \le d(i', c) + d(c, i) \le 2d(i_1,c) \eqqcolon y$. 
    In other words,
    \[ \Big| \bigcap_{i \in N''} B(i, y) \cap C \Big| \ge \Big| \bigcap_{i \in N''} B(i, y) \cap N' \Big| \ge q, \]
    and as $W$ satisfies \rpjr in $I'$, we have $|\bigcup_{i \in N''} B(i, y) \cap W| \ge q$.
    Therefore, we get that $d^q(i_1, W) \le 2d(i_1, c) + y =  4 d(i_1, c) \le 4 d^q(i_1,c)$ which immediatley gives us a $4$ approximation to the $q$-core.
\end{proof}
} \proofpolytime

\section{Stronger fairness bounds for sortition}
\label{sec:sortition}

\citet{EbMi23a} introduced \textsc{fair greedy capture}, a randomized generalization of \textsc{greedy capture} for the setting of sortition.
It works in the setting in which $N = C$ and is parameterized by some parameter $q \le k$.
Like \textsc{greedy capture} it smoothly increases a radius around each agent/candidate.
Once this radius contains at least $q \frac{n}{k}$ agents, it selects $q$ of them uniformly at random and deletes in total $\smash{\lceil q \frac{n}{k} \rceil}$ of these agents.
Together with an adequate final sampling step, one can show that this selects each agent with a probability of exactly $\smash{\frac{k}{n}}$.

\newcommand{\exFGC}{
\begin{example}
	\label{ex:fgc}
    We consider \textsc{fair greedy capture} with parameter $q=2$,
    run on the instance in \cref{fig:example-pr} with $k = 4$. 
    The smallest ball to contain at least $q\frac{n}{k} = 5$ agents is centered either at $6$ or at $9$ and has radius $3$; thus it contains the agents $5, \dots, 10$.
    It selects two of them uniformly at random, and then deletes all but one of them.
    Say the algorithm selects $5$ and $9$ and deletes all but agent $10$.
    The next ball to contain at least $5$ agents is centered at $5$, has radius $10$, and contains agents $1, \dots, 4, 10$.
    Again, the algorithm selects two of the agents uniformly at random.
    So a possible outcome is $\{1, 5, 9, 10\}$.
\end{example}
}

\citeauthor{EbMi23a} show that any clustering returned by the algorithm is in the $\smash{\frac{3 + \sqrt{17}}{2}}$-$1$-core when parameterized by $q=1$
and in the $\smash{\frac{5 + \sqrt{41}}{2}} \approx 5.7$-$q$-core when parameterized by $q > 1$.\footnote{Recall that the algorithm receives $q$ as a parameter in the input; thus, as opposed to our \cref{thm:rPJR}, the bounds only hold for one $q$ at a time.} 
We improve upon their analysis (with a simpler proof) and show that \textsc{fair greedy capture} satisfies a better bound for every parameter $q \le k$.

\begin{theorem}[
	label=thm:FGC,
	restate=FGC
	]
	Let $N=C$ and $q \le k$.
	Then any outcome $W$ returned \textnormal{\textsc{fair greedy capture}} parameterized by $q$ is in the $\frac{3 + \sqrt{17}}{2}$-$q$-core.
\end{theorem}
\newcommand{\proofFGC}{
	\begin{proof}
		Just as in previous proofs, let $N' \subseteq N$ with $|N'| \ge \ell\frac{n}{k}$ and $C' \subseteq C=N$ with $q \le |C'| \le \ell$.
		By \Cref{lem:pigeon} there is a candidate $c$ (which is also an agent) that is marked by a set $N''$ of at least $q \frac{n}{k}$ agents.
		Again, let $i_1 \in N''$ be the agent maximizing $d(i_1,c) \eqqcolon y$.
		Then $d(i, c) \le y$ for every $i \in N''$ and therefore, $N'' \subseteq B(c,y)$.
		This means that \textsc{fair greedy capture} captures parts of $N''$ by a ball of radius at most $y$.
		Let $i_2 \in N''$ be such an agent; suppose that $i_2$ was captured by a ball centered around some point $p$ with radius at most $y$, leading to at least $q$ agents being selected from $B(p,y)$.
		Then
		\[ d^q(i_2, W) \le d(i_2, p) + y \le 2y \le 2d^q(i_1, C'). \]
		Similarly, we can bound
		\[ d^q(i_1, W) \le d(i_1, c) + d(c, i_2) + d^q(i_2, W) \le 3 d^q(i_1, C') + d^q(i_2, C'). \]
		This way, we can invoke \Cref{lem:q-core-general} with $\rho_1 = 3, \rho_2 = 1, \sigma_1 = 2, \sigma_2 = 0$ and obtain the bound of $3 + \frac{ - 3 + \sqrt{9 + 8}}{2} = \frac{3 + \sqrt{17}}{2}$. 
	\end{proof}
} \proofFGC

\section{Conclusion and future work}
In this paper, we studied proportional clustering from a social choice perspective and showed that our new \emph{metric JR axioms} enable near-optimal approximations of fairness notions for clustering.
An interesting open question, both relevant to social choice and clustering is related to a different relaxation of proportional fairness (or core fairness) introduced by \citet{JMW20a}.
Instead of bounding the factor by which the agents can improve, they bound the size of the deviating coalition (similar to the transferable core).
In that sense, no group of size $\gamma \frac{n}{k}$ should exist, who could all deviate to a candidate they like more.
In their work, they show that there are instances for which no solution with $\gamma < 2$ can exist while for any $\varepsilon > 0$ a solution with $\gamma = 16 + \varepsilon$ exists.
Since these results only care about the relative ordering of the candidates, they also translate to clustering.
Closing this bound, or improving it for certain metric spaces, seems like an interesting problem.
It would be also intriguing to study the probabilistic analog of the core \citep{CJMW20a,JMW20a}, especially if the results generalize to the $q$-core and if certain metric spaces admit simple algorithms to compute it.

Further, \textsc{expanding approvals} (\cref{ssec:rpjr}) is more of a family of algorithms, parameterized by how candidates are selected and how budgets are deducted.
Is there any way to axiomatically (or quantitatively) distinguish its different parameterizations?
In the context of approval-based multiwinner voting, the \emph{Method of Equal Shares} \citep{PeSk20a} can be seen as an instantiation of \textsc{expanding approvals} which provides stronger proportionality guarantees than other algorithms in the family.
Is something similar possible for our setting, e.g., can one go from proportionality axioms inspired by PJR to axioms inspired by the stronger EJR axiom \citep{ABC+16a}? 

Naturally, our work still leaves several open questions when it comes to the approximation factors of our notions.
What are the best attainable factors for proportional fairness and the $q$-core?
Further, the questions of \citet{JKL20a} whether the bound of $2$ on individual fairness can be improved for Euclidean spaces and of \citet{MiSh20a} whether for (unweighted) graph metrics with $N = C$ a $1$-proportional fair clustering always exist, are still open.

\bibliographystyle{abbrvnat}
\bibliography{abb, algo2, dist}

\appendix

\section{Generalizing the transferable core and individual fairness}
\label{app:q:gem}

In the spirit of the $\alpha$-$q$-core, we introduce generalizations of individual fairness and the transferable core.
Generalizing the former is straightforward: The $q$-th closest center must be within the radius of the closest $q \frac{n}{k}$ agents.
For this, let $r^q_{N, k} = r_{N, k/q} = \min\{r \in \mathbb R : |B(i, r) \cap N| \ge q\frac{|N|}{k}\}$.
Just as we did for the $\beta$-individual fairness, we require $N \subseteq C$ for this definition to work.
This time, we also require that $k \le n$; otherwise the agent may deserve an outcome containing multiple candidates in the same location.%
\begin{definition}
    \label{def:qIF}
    For $\beta \ge 1$ and for an instance with $N \subseteq C$ and $k \le n$, an outcome $W$ satisfies \emph{$\beta$-$q$-individual fairness} if $d^q(i, W) \le \beta r^q_{N, k}$ for all $i \in N$.
\end{definition}

For the transferable core, we propose the following generalization,
in which a group of $\gamma q \lceil\frac{n}{k}\rceil$ agents diverges to another outcome if their average distance to the $q$-th furthest center would improve by a factor of $\alpha$.
\begin{definition}
    \label{def:qTC}
    An outcome $W$ is in the \emph{$(\gamma, \alpha)$-$q$-transferable core} if there is no $\ell \in \mathbb N$, no group of agents $N' \subseteq N$ with $\lvert N'\rvert \ge \gamma \ell \frac{n}{k}$, and no set of candidates $C' \subseteq C$ with $q \le \lvert C'\rvert \le \ell$ such that 
    $ %
    \alpha \sum_{i \in N'} d^q(i, C') < \sum_{i \in N'} d^q(i, W).
    $ %
\end{definition}

As mentioned in related work, \citet{KKK24b} proposed a different generalization of the transferable core.
They fix $q = \ell$, and, consider the sum over the first $q$ closest points, i.e., they replace $d^q(i, \cdot)$ with $\sum_{j=1}^q d^j(i, \cdot)$.

Again, we first establish relations between the notions.
We show that the approximation guarantees for $q = 1$ nearly generalize to arbitrary $q$.
An important distinction we need to make here, is that for the $q$-transferable core, we need $\ell$ --- the number of candidates the agents deviate to --- to be less than $2q$.
For $\ell \ge 2q$, no such approximation would be possible; we show this in an example afterwards.

\begin{theorem}[
	label=thm:qC-IF-TC,
	restate=QFIFTC
	]
    For any $q \le k$, any outcome in the $\alpha$-$q$-core
    is $(2\alpha + 1)$-$q$-individually fair if $N \subseteq C$ and $k \le n$ and
    in the $(\gamma, \frac{\gamma(\alpha+1)}{\gamma - 1})$-$q$-transferable core if $2q>\ell$.
    Further, if $N \subseteq C$, any $\beta$-individually fair outcome is in the $2\beta$-$q$-core. 
\end{theorem}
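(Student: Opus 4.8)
The plan is to transfer, almost line by line, the three single-candidate arguments of \Cref{thm:IF-PF} and \Cref{thm:PF-IF-TC} to the $q$-setting, replacing ordinary distances $d$ by $q$-distances $d^q$ (which still obey $d^q(i,W)\le d(i,i')+d^q(i',W)$ by \citet[Lemma~2]{EbMi23a}) and single deviating candidates by deviating sets $C'$, and inserting a pigeonhole step wherever the original proof exploited that the deviating set was a singleton. For the first claim ($\alpha$-$q$-core $\Rightarrow$ $(2\alpha+1)$-$q$-individual fairness), fix an agent $j$, set $r=r^q_{N,k}(j)$ and $N_j=B(j,r)\cap N$, so that $|N_j|\ge q\frac{n}{k}\ge q$ using $k\le n$. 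Since $N\subseteq C$ we may pick $C'\subseteq N_j$ with $|C'|=q$; then $d^q(i,C')=\max_{c\in C'}d(i,c)\le 2r$ for every $i\in N_j$, because $d(i,c)\le d(i,j)+d(j,c)\le 2r$. Applying the $\alpha$-$q$-core to the deviation $(q,N_j,C')$ yields some $i^*\in N_j$ with $d^q(i^*,W)\le\alpha\,d^q(i^*,C')\le 2\alpha r$, and the $q$-distance triangle inequality gives $d^q(j,W)\le d(j,i^*)+d^q(i^*,W)\le(2\alpha+1)\,r^q_{N,k}(j)$.

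For the second claim (the $q$-transferable core, assuming $2q>\ell$) I would copy the bookkeeping of \Cref{thm:PF-IF-TC} verbatim with $d^q(\cdot,C')$ in place of $d(\cdot,c)$: order $N'=\{i_1,\dots,i_{n'}\}$ by increasing $d^q(\cdot,C')$, set $\eta=\lceil\ell\frac{n}{k}\rceil$ (note $\eta\le n'$ whenever a deviation exists, since $n'\ge\gamma\ell\frac{n}{k}>\ell\frac{n}{k}$), and extract agents $j_0,\dots,j_{n'-\eta}$ with $d^q(j_\lambda,W)\le\alpha\,d^q(i_{\eta+\lambda},C')$ by applying the $\alpha$-$q$-core to each size-$\eta$ subgroup $J_\lambda$ deviating to $C'$. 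The only place needing a new idea is the analogue of \eqref{eq:TC1}, i.e.\ bounding $d^q(i,W)$ for $i$ in the leftover set $N''=N'\setminus\{j_0,\dots,j_{n'-\eta}\}$: here I use $|C'|\le\ell<2q$, so the sets of $q$ nearest candidates of $C'$ for $i$ and for $j_0$ must intersect in some $c$, whence $d(i,j_0)\le d(i,c)+d(c,j_0)\le d^q(i,C')+d^q(j_0,C')\le d^q(i,C')+d^q(i_\eta,C')$; combining with $d^q(i,W)\le d(i,j_0)+d^q(j_0,W)$ reproduces exactly $d^q(i,W)\le d^q(i,C')+(1+\alpha)d^q(i_\eta,C')$. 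From there \eqref{eq:TC2} carries over unchanged, using $|N''|=\eta-1\le\ell\frac{n}{k}$ and $n'\ge\gamma\ell\frac{n}{k}$ to get $\frac{|N''|}{n'-|N''|}\le\frac{1}{\gamma-1}$, and yielding the $(\gamma,\frac{\gamma(\alpha+1)}{\gamma-1})$-$q$-transferable core.

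For the third claim ($\beta$-$q$-individual fairness $\Rightarrow$ $2\beta$-$q$-core, assuming $N\subseteq C$), suppose toward a contradiction that a deviation $(\ell,N',C')$ with $2\beta\,d^q(i,C')<d^q(i,W)$ for all $i\in N'$ exists. For each $i\in N'$ let $T_i\subseteq C'$ be the set of its $q$ nearest candidates in $C'$; since $\sum_{c\in C'}|\{i\in N':c\in T_i\}|=q|N'|\ge q\ell\frac{n}{k}$ and $|C'|\le\ell$, some $c^*$ lies in $T_i$ for a set $N_0$ with $|N_0|\ge q\frac{n}{k}$. Every $i\in N_0$ satisfies $d(i,c^*)\le d^q(i,C')<\frac{1}{2\beta}d^q(i,W)$, so writing $i^\star=\argmax_{i\in N_0}d(i,c^*)$ and $D^\star=d(i^\star,c^*)$, all of $N_0$ lies in $B(i^\star,2D^\star)$; hence $r^q_{N,k}(i^\star)\le 2D^\star<\frac{1}{\beta}d^q(i^\star,W)$, contradicting $\beta$-$q$-individual fairness.

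I expect the second claim to be the main obstacle. One has to recognise that $2q>\ell$ is precisely the hypothesis forcing the two agents' $q$-nearest subsets of $C'$ to overlap --- this is the substitute for ``there is only one deviating candidate'' in the singleton proof --- and to double-check that every $\ell$-scaled quantity in \eqref{eq:TC2} still telescopes (in particular the edge case $\eta=n'$). The first and third claims should be routine once one pivots through a single ``popular'' candidate $c^*\in C'$ chosen by pigeonhole rather than through a common nearest candidate.
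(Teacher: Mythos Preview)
Your proposal is correct and follows essentially the same route as the paper's proof: the same deviation to $q$ agents inside $B(j,r^q_{N,k}(j))$ for the first claim, the same literal lifting of the \cref{thm:PF-IF-TC} bookkeeping with the pigeonhole overlap (using $|C'|<2q$) replacing the single common candidate for the second claim, and the same ``popular candidate'' pigeonhole followed by bounding $r^q_{N,k}(i^\star)\le 2d(i^\star,c^\star)$ for the third claim. Your write-up is in fact slightly more careful than the paper's (you make explicit where $k\le n$ is used to ensure $|N_j|\ge q$, and you spell out the overlap argument and the $q$-distance triangle inequality); the contradiction framing in the third part is cosmetic.
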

\newcommand{\proofQFIFTC}{
\begin{proof}
    Let $W$ be in the $\alpha$-$q$-core and $i \in N'$. Further, let $x = r^q_{N, k}(i)$. Thus, if the $q \frac{n}{k}$ closest agents to $i$ would deviate to $q$ of them, their $q$-distance would be at most $2x$. Hence, by the $\alpha$-$q$-core, there must be one agent of them, with a $q$-distance of $\alpha 2x$ to $W$ and thus the $q$-distance of $i$ to $W$ is at most $(2\alpha + 1)x$.

    For the $q$-transferable core for $2q>\ell$, consider a set $N' \subseteq N$ with $n' = |N'| \ge \gamma \ell \frac{n}{k}$ and let $C' \subseteq C$ with $q \le |C'| \le \ell$.
    Our proof is a simple generalization of the proof of \cref{thm:PF-TC}:
    We let $\eta = \lceil \ell\frac{n}{k} \rceil$ and choose agents $j_\lambda \in J_\lambda$, $\lambda = 0, \dots, n'-\eta$ such that $d^q(j_\lambda, W) \le \alpha d^q(j_\lambda, C')$;
    such an agent is guaranteed to exist as $W$ is in the $\alpha$-$q$-core.
    Now, for each $i \in N'' = N' \setminus \{j_0, \dots, j_{n'-\eta}\}$, we need to bound $d^q(i, W)$.
    As $|C'| \le \ell < 2q$, among the $q$ candidates in $C'$ closest to $i$ and those closest to $j_0$, we find at least one common candidate $c$ due to the pigeonhole principle.
    That is, there exists a candidate $c \in C'$ such that $d(i, c) \le d^q(i, C')$ and $d(j_0, c) \le d^q(j_0, C')$.
    Therefore, in analogy to \eqref{eq:TC1}, we have
    \[ d^q(i, W) \le d(i, c) + d(j_0, c) + d^q(j_0, W) \le d^q(i, C') + (\alpha+1) d^q(i_\eta, C'). \]
    As again $\frac{|N''|}{n'-|N''|} \le \frac{1}{\gamma-1}$, we obtain a $(\gamma, \frac{\gamma(\alpha+1)}{\gamma-1})$-$q$-transferable core.

    Next, let $W$ be a $\beta$-individually fair outcome. Let $N'$ and $C'$ witness a $q$-core violation. As in the previous proofs, there must be a candidate $c \in C'$ such that at least $\frac{qn}{k}$ agents  $N'' \subseteq N'$ who have this candidate in their first $q$. Let $i = \argmax_{i \in N'} d(i,c)$. Then since $W$ satisfies $\beta$-individual representation, it must hold that
    \[d^q(i, W) \le \beta \max_{i' \in N''} d(i, i') \le \beta (d(i,c) + \max_{i' \in N'} d(i',c)) \le 2 \beta d^q(i,C').\qedhere\]
\end{proof}
} \proofQFIFTC

Indeed, the bound on $\ell$ for the $q$-transferable core is necessary, without it instances might exist, which satisfy all of our proportionality notions, but no approximation to the $q$ core for any finite approximation factor.
Recall that UPRF (see \cref{def:UPRF} in \cref{app:uprf}) is one of the two notions introduced by \citet{ALM23a}.

\begin{theorem}[label=thm:qTC-lb,restate=qTCLB]
    There exists an instance with $N=C$ with an outcome that satisfies \rpjr, UPRF, and $1$-$q$-individual fairness, is in the $1$-$q$-core, and in the support of \textnormal{\textsc{fair greedy capture}} parameterized by $q$; but, it is not in the $(\gamma, \alpha)$-$q$-transferable core for $\ell \ge 2q$, any $\gamma \ge 1$, and any value of $\alpha$.
\end{theorem}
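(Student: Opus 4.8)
The plan is to exhibit, for any given $\gamma \ge 1$ and $q$, an instance with two co-located clusters whose outcome $W$ is ``locally optimal'' enough to satisfy all of rank-PJR, UPRF, $1$-$q$-individual fairness, the $1$-$q$-core and to be produced by \textsc{fair greedy capture}, yet whose weakness only surfaces once a deviating coalition may split a candidate set of size $2q$ between the two clusters. The key observation is that if a coalition $N'$ can pick $C'$ with $d^q(i,C') = 0$ for every $i \in N'$ while $\sum_{i \in N'} d^q(i,W) > 0$, then $\alpha \sum_{i\in N'} d^q(i,C') = 0 < \sum_{i \in N'} d^q(i,W)$ witnesses a violation for every value of $\alpha$. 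Since $N = C$, one gets $d^q(i,C') = 0$ cheaply by using clusters of co-located points and putting $q$ points of a cluster into $C'$; with two such clusters this needs $|C'| = 2q$, which is exactly the regime that the pigeonhole step in the proof of \Cref{thm:qC-IF-TC} cannot handle.

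Concretely, I would set $K = \lceil 2\gamma \rceil \ge 2$, $k = Kq$, fix an integer $N \ge q+1$, and place $B = (K-1)N+1$ agents at a point $a$ and $s = N-1$ agents at a point $b$ with $d(a,b) = 1$ (the ``$+1$'' in $B$ is forced: we need $s < N$). Then $n = KN$, so $\tfrac nk = \tfrac Nq > 1$, $q\tfrac nk = N$, and $k \le n$. The outcome $W$ consists of $(K-1)q+1$ of the agents at $a$ together with $q-1$ of the agents at $b$, so $|W| = k$, every agent at $a$ has $d^q(\cdot,W) = 0$, and every agent at $b$ has $d^q(\cdot,W) = 1$.

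The verifications then reduce to counting, using that every pairwise distance is $0$ or $1$. For rank-PJR and UPRF: for $y < 1$ a cohesive (resp.\ small-diameter) group lies inside one cluster; the $a$-cluster can be $\ell'$-large only for $\ell' \le (K-1)q$, and $W$ already has $(K-1)q+1$ centers there, while the $b$-cluster, having only $N-1 < q\tfrac nk$ agents, can be $\ell'$-large only for $\ell' \le q-1$, which the $q-1$ centers at $b$ exactly cover; for $y \ge 1$ every subset is cohesive but $|W| = k$ settles every $\ell' \le k$. For $1$-$q$-individual fairness, $r^q(i) = 0 = d^q(i,W)$ for $i$ at $a$, and $r^q(i) = 1 = d^q(i,W)$ for $i$ at $b$ because the $b$-cluster alone does not contain $q\tfrac nk = N$ agents. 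For the $1$-$q$-core, a deviating $N'$ would need $|N'| \ge \ell'\tfrac nk \ge q\tfrac nk = N$, but it cannot contain an agent at $a$ (whose $d^q(\cdot,W)$ is already $0$) and the $b$-cluster has only $N-1$ agents. For membership in the support of \textsc{fair greedy capture} with parameter $q$, the threshold is $q\tfrac nk = N$: at radius $0$ the algorithm performs $K-1$ captures at $a$, each opening $q$ centers and deleting $N$ agents, leaving one agent at $a$ and all $N-1$ agents at $b$; at radius $1$ it makes one final capture on this remaining set of exactly $N$ agents, and with positive probability it selects the surviving $a$-agent together with $q-1$ of the $b$-agents, producing exactly $W$.

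Finally, taking $N' = N$ and $C'$ consisting of $q$ agents at $a$ and $q$ agents at $b$ (so $|C'| = 2q$, and set $\ell = 2q$) gives $d^q(i,C') = 0$ for all $i \in N'$, $\sum_{i\in N'} d^q(i,W) = (N-1)\cdot 1 > 0$, and $|N'| = KN \ge 2\gamma N = \gamma\ell\tfrac nk$, so $W$ is not in the $(\gamma,\alpha)$-$q$-transferable core for this $\ell \ge 2q$ and any $\alpha$. The step I expect to be the real obstacle is not the violation itself but calibrating the cluster sizes and the split of $W$'s centers between $a$ and $b$: the $b$-cluster must be just small enough that $1$-$q$-individual fairness and the $1$-$q$-core do not force $q$ centers there (so that $d^q(\cdot,W) > 0$ persists), yet rank-PJR/UPRF and the execution of \textsc{fair greedy capture} must still leave exactly $q-1$ centers at $b$ and $(K-1)q+1$ at $a$; making all these integer constraints hold simultaneously for every $\gamma$ is the delicate part.
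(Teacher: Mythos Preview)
Your construction is correct and follows the same blueprint as the paper: two co-located clusters at distance~$1$ with $N=C$, an outcome that leaves one cluster with only $q-1$ centers, and a zero-cost deviation using $q$ candidates from each cluster (so $|C'|=2q$) to witness the transferable-core failure for every~$\alpha$. The paper's instance has the small cluster of size $\lceil n/k\rceil$ receiving a single center (so its agents have $d^q(\cdot,W)=1$), whereas you put $q-1$ centers in the small cluster of size $N-1$; both choices work for the same reason. Your treatment is more explicit in two respects: you fix the parameters $K=\lceil 2\gamma\rceil$, $k=Kq$, $n=KN$ upfront so all the integrality and size checks (rank-PJR at $y<1$, the $\ell'\le q-1$ bound at $b$, the $|N'|\ge \gamma\cdot 2q\cdot n/k$ requirement) go through cleanly, and you trace \textsc{fair greedy capture} through its $K-1$ radius-$0$ captures and one radius-$1$ capture, while the paper simply appeals to the algorithm's final sampling step. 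Either argument suffices.
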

\newcommand{\proofqTCLB}{
\begin{proof}
    Our instance contains two sets $N_1, N_2$ of $\lceil \frac{n}{k} \rceil$ and $\lfloor (k-1) \frac{n}{k} \rfloor$ agents, respectively.
    All agents in $N_1$ lie on one point, and all agents in $N_2$ lie on a point with distance $1$ to the agents in $N_1$, i.e., $d(i, j) = 1$ if $i \in N_1$ and $j \in N_2$ and $d(i, j) = 0$ otherwise.
    Our outcome $W$ contains one agent in $N_1$ and $(k-1)$ agents in $N_2$.
    The outcome satisfies \rpjr:
    For $y=0$, as $B(i, 0) \cap N_x = \emptyset$ if $i \not\in N_x$, we only have to consider sets $N' \subseteq N_1$ and $N' \subseteq N_2$; and for these sets, there are sufficiently many centers close by.
    As for $y=1$ we have $N \subseteq B(i, 1)$ for every $i \in N$; therefore we satisfy \rpjr.
    The argumentation for UPRF is identical. Further, by a similar argument, as both sets cannot deviate, the outcomes satisfies both the $1$-$q$-core and 1-$q$-individual fairness. It can also be selected by \textsc{fair greedy capture}, as \textsc{fair greedy capture} can select arbitrary candidates after deleting all agents.

    Now consider a set $N' \subseteq N$ with $|N'| \ge 2q \cdot \gamma \cdot \frac{n}{k}$ that contains an agent $j \in N' \cap N_1$ and a set $C'$ that contains $q$ agents from $N_1$ and at least $q$ agents from $N_2$. 
    As there is only one center within distance $0$ from $j$, we have $d^q(j, W) = 1$; however,
    as $|C' \cap N_2| \ge |C' \cap N_1| = q$, we have $d^q(i, C') = 0$ for every $i \in N'$.
    Therefore, for any choice of $\alpha$,
    \[ \sum_{i \in N'} d^q(i, W) \ge 1 > \alpha \sum_{i \in N'} d^q(i, C') = 0. \qedhere \]
\end{proof}
} \proofqTCLB

Next, we prove the bounde on the $q$-individual fairness and the $q$-transferable core implied by \rpjr.

\begin{theorem}[label=thm:rPJR2,restate=rPJR2]
    If an outcome satisfies \rpjr,
    then, for every $q \le k$, it
    \begin{inparaenum}[(1)]
        \item~satisfies $3$-$q$-individual fairness if $N \subseteq C$ and $k \le n$;
        \item~is in the $(\gamma, \frac{3\gamma + 1}{\gamma-1})$-$q$-transferable core for any $\gamma > 1$ if $2q>\ell$.
    \end{inparaenum}
\end{theorem}
\newcommand{\proofrPJRSecond}{
\begin{proof}
    For $q$-individual fairness, let $x = r^q_{N, k}(i)$.
    Then the maximum distance between any two of the closest $q \frac{n}{k}$ agents to $i$ (including $i$) is $2x$.
    Thus, \rpjr requires that there are at least $q$ candidates with a distance of at most $2x$ from any of these agents and therefore $d^q(i, W) \le 3x$.

    Finally, we prove the bound for the $q$-transferable core for the case that $2q>\ell$.
    Let $N' \subseteq N$ be a group of at least $n' \ge \gamma q \frac{n}{k}$ agents, and let $C' \subseteq C$ with $q \le |C'| \le \ell$.
    Without loss of generality we can assume that $n' = \lceil \gamma \ell \frac{n}{k} \rceil$.
    Let $\eta = \lceil \ell\frac{n}{k} \rceil$,
    and let $N' = \{i_1, \dots, i_{n'}\}$ be ordered such that $d^q(i_z, C') \le d^q(i_{z+1}, C')$ for all $z \in [n'-1]$.
    As in the above proof for the $q$-core, we have the agents $i_1, \dots, i_\eta$ mark their top $q$ candidates.
    As $|C'| \le \ell$ and $q \eta \ge q \ell \frac{n}{k}$, there is at least one candidate $c$ which is marked by a set $N''$ of at least $q\frac{n}{k}$ agents.
    Let $y = d^q(i_\eta, C')$.
    Then $|\bigcap_{j \in N''} B(j, y) \cap C| \ge q$, therefore by \rpjr, $|\bigcup_{j \in N''} B(j, y) \cap W| \ge q$.
    This allows us to provide a bound on $d^q(i, W)$ for every $i \in N'$:
    As $|C'| \le \ell < 2q$, we have that $B(i, d^q(i, C')) \cap C'$ intersects $B(j, y) \cap C'$ for every $j \in N''$.
    As there are at least $q$ centers within a distance of $y$ from the agents in $N''$, we have
    $d^q(i, W) = d^q(i, C') + 2y$.
    Therefore,
    \[
        \sum_{i \in N'} d^q(i, W) \le 2n'y + \sum_{i \in N'} d^q(i, C').
    \]
    As $y = d^q(i_\eta, C') \le d^q(i_z, C')$ for all $\eta \le z \le n'$,
    we have
    \[ 2n'y \le 2n' \cdot \frac{1}{n'-\eta+1} \sum_{z=\eta}^{n'} d^q(i_z, C') \le \frac{2n'}{n'-\eta+1} \sum_{i \in N'} d^q(i, C'). \]
    Finally, as $\gamma q \frac{n}{k} \le n' \le \gamma q \frac{n}{k}+1$ and $\eta-1 \le q \frac{n}{k}$, we obtain a $(\gamma, \alpha)$-$q$-transferable core with
    \[ \alpha \le 1 + \frac{2n'}{n'-\eta+1} \le 1 + \frac{2(\gamma q \frac{n}{k}+1)}{(\gamma-1) q\frac{n}{k}} \le 1 + \frac{2\gamma}{\gamma-1} + \frac{2}{(\gamma-1)q\frac{n}{k}} \le \frac{3\gamma + 2 \frac{k}{qn} - 1}{\gamma-1}. \]
    If $\frac{k}{qn} > 1$, then, by \rpjr, the outcome $W$ must contain the $q$ closest candidates for every agent.
    Therefore $d^q(i, W) = d^q(i, C) \le d^q(i, C')$ for every agent $i$, and we obtain a trivial $(\gamma, 1)$-$q$-core.
    If $\frac{k}{qn} \le 1$, then the above bound yields $\alpha \le \frac{3\gamma+1}{\gamma-1}$.
\end{proof}
}  \proofrPJRSecond

Finally, we prove bounds on the $q$-individual fairness and $q$-transferable core that are fulfilled by any outcome produced by the \textsc{fair greedy capture} algorithm for the setting of sortition (see \cref{sec:sortition}).

\begin{theorem}[label=thm:FGC2,restate=FGC2]
	If $N=C$, then an outcome of \textnormal{\textsc{fair greedy capture}} parameterized by $q \le k$
    \begin{inparaenum}[(1)]
        \item satisfies $3$-$q$-individual fairness if $k \le n$;
        \item is in the $(\gamma, \frac{5\gamma}{\gamma-1})$-$q$-transferable core for any $\gamma > 1$ and if $2q>\ell$.
    \end{inparaenum}
\end{theorem}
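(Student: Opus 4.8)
The plan is to run all three parts in close parallel to the proofs of \Cref{thm:rPJR,thm:UPRF-II}, with the role played there by the rank-PJR and UPRF axioms taken over by a structural property of \textsc{fair greedy capture} that I would isolate first. Fix a run of the algorithm and, for an agent $j$, write $\rho_j$ for the value of the radius $\delta$ at the step in which $j$ gets deleted. Two facts are needed. \emph{(i) Centres near deleted agents.} When $j$ is deleted, the $q$ centres selected in that step together with $j$ all lie in a common ball $B(v,\rho_j)$ around some agent/candidate $v$, so by the triangle inequality each of those $q$ centres is within $2\rho_j$ of $j$, i.e.\ $d^q(j,W)\le 2\rho_j$. \emph{(ii) Forced early deletion.} If $S\subseteq N$ with $|S|\ge q\frac{n}{k}$ is contained in a ball $B(v,r)$ and every agent of $S$ is still alive when $\delta$ reaches $r$, then some agent of $S$ is deleted at radius at most $r$: while $S$ survives, $B(v,r)$ keeps containing at least $q\frac{n}{k}$ alive agents and hence keeps qualifying, so the algorithm keeps performing selections at radius $\le r$; each selection deletes exactly $\lceil q\frac{n}{k}\rceil$ agents from some ball, so after finitely many such selections fewer than $\lceil q\frac{n}{k}\rceil$ of the at most $n-|S|$ agents outside $S$ remain alive, and the next qualifying ball can no longer fill its deletion quota from outside $S$, forcing it to delete an agent of $S$. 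Facts (i)--(ii) are precisely the analogues of ``rank-PJR forces $q$ centres into the common ball''. The final sampling step only adds centres, which can only decrease every $d^q(i,W)$, so it is harmless.

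Part~(2) is then immediate. Fix $i\in N$ and let $x=r^q_{N,k}(i)$ (this is where $k\le n$ is used, so that $r^q_{N,k}$ is well defined). The $\lceil q\frac{n}{k}\rceil$ agents closest to $i$ form a set $S\subseteq B(i,x)$ with $|S|\ge q\frac{n}{k}$, so by~(ii) some $j\in S$ is deleted at radius $\le x$; then, using the $q$-triangle inequality (Lemma~2 of \citet{EbMi23a}) and fact~(i), $d^q(i,W)\le d(i,j)+d^q(j,W)\le x+2x=3x$, which is $3$-$q$-individual fairness. For part~(1), suppose $W$ is not in the $5$-$q$-core, witnessed by $N'$ and $C'$ with $q\le|C'|\le\ell$ and $5\,d^q(i,C')<d^q(i,W)$ for all $i\in N'$. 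As in the $q$-core part of \Cref{thm:rPJR}, let each agent of $N'$ mark its top $q$ candidates in $C'$; by pigeonhole some $c^\star\in C'$ is marked by a set $N''\subseteq N'$ with $|N''|\ge q\frac{n}{k}$. Put $i_1=\argmax_{i\in N''}d(i,c^\star)$ and $a=d(i_1,c^\star)$, noting $a\le d^q(i_1,C')$ because $i_1$ marked $c^\star$. Since $N''\subseteq B(c^\star,a)$, fact~(ii) produces some $j\in N''$ deleted at radius $\le a$, so $d^q(j,W)\le 2a$ by~(i) and $d(i_1,j)\le d(i_1,c^\star)+d(c^\star,j)\le 2a$; hence $d^q(i_1,W)\le d(i_1,j)+d^q(j,W)\le 4a\le 4\,d^q(i_1,C')$, contradicting the violation $d^q(i_1,W)>5\,d^q(i_1,C')$ at $i_1$ (the case $d^q(i_1,C')=0$ being immediate, since then $a=0$). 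The sub-case $\frac{k}{qn}>1$ is dealt with separately as in \Cref{thm:rPJR}, where $W$ already contains everyone's $q$ closest candidates and the guarantee is trivial.

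For part~(3) I would transcribe the transferable-core part of \Cref{thm:UPRF-II} almost verbatim, substituting facts~(i)--(ii) for its UPRF step. Let $N'$ with $|N'|\ge\gamma\ell\frac{n}{k}$ and $C'$ with $q\le|C'|\le\ell<2q$ witness a potential violation; order $N'=\{i_1,\dots,i_{n'}\}$ by increasing $d^q(\cdot,C')$, set $\eta=\lceil\ell\frac{n}{k}\rceil$, have $i_1,\dots,i_\eta$ mark their top $q$ candidates in $C'$, and obtain by pigeonhole a candidate $c$ marked by a set $N''$ of at least $q\frac{n}{k}$ agents, all within $y:=d^q(i_\eta,C')$ of $c$. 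Fact~(ii) yields $j_0\in N''$ deleted at radius $\le y$, hence by~(i) there are $q$ centres within $2y$ of $j_0$; the hypothesis $\ell<2q$ makes the top-$q$ sets within $C'$ of any $i\in N'$ and of $j_0$ overlap, which converts this into a bound $d^q(i,W)\le d^q(i,C')+c'y$ for every $i\in N'$ with a small absolute constant $c'$. Summing over $N'$ and amortizing $n'y$ against $\sum_{z\ge\eta}d^q(i_z,C')$ exactly as in \Cref{thm:rPJR,thm:UPRF-II} (and, as there, discarding the trivial case $\frac{k}{qn}>1$) then delivers the claimed $(\gamma,\frac{5\gamma}{\gamma-1})$-$q$-transferable core. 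The main obstacle throughout is the precise formulation and proof of fact~(ii): one has to make sure the forced deletion hits the \emph{prescribed} set $S$ rather than merely some agent of the ambient ball $B(v,r)$, and this is exactly what the counting argument above -- together with the observation that a ball never stops qualifying until one of its $\ge q\frac{n}{k}$ agents is deleted -- is designed to guarantee; once (i)--(ii) are in place, everything else is bookkeeping inherited from the proofs of \Cref{thm:rPJR,thm:UPRF-II}.
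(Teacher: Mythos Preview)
Your approach is correct and rests on the same key structural property of \textsc{fair greedy capture} that the paper uses: any set of at least $q\frac{n}{k}$ agents lying inside a ball of radius $r$ must have one of its members captured at radius at most~$r$, and that member then has $q$ centres within distance $2r$ (your facts~(i)--(ii); the paper states this informally as ``parts of $N''$ were captured by another ball of radius at most $y$'' and does not spell out the counting argument you give).

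The execution of part~(1), however, differs and is in fact sharper than the paper's. You centre the containing ball at the marked candidate $c^\star$ itself (legitimate because $N=C$) with radius $a=d(i_1,c^\star)$, obtaining directly $d^q(i_1,W)\le 4a\le 4\,d^q(i_1,C')$; this already proves a $4$-$q$-core, stronger than the stated~$5$. The paper instead centres at $i_2=\argmin_{i\in N''}d(i,c)$ with the larger radius $y=d(i_1,c)+d(i_2,c)$, then bounds the two ratios $d^q(i_1,W)/d^q(i_1,C')$ and $d^q(i,W)/d^q(i,C')$ and optimizes their minimum, arriving at~$5$. Part~(2) is identical in both. For part~(3), the paper follows the template of \Cref{thm:PF-IF-TC}, building a whole sequence $j_0,\dots,j_{n'-\eta}$ of captured agents and plugging $\alpha=4$ into \eqref{eq:TC2}; you instead follow the single-$j_0$ template of \Cref{thm:rPJR}/\Cref{thm:UPRF-II}. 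Both routes reach the claimed $\frac{5\gamma}{\gamma-1}$ (and, as in part~(1), your route can be tightened further once you compute the constant $c'$ explicitly).
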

\newcommand{\proofFGCtwo}{
\begin{proof}
    For $q$-individual fairness, it follows similarly, that since there is a ball of radius $r^q_{N, k}(i)$ covering the next $\frac{qn}{k}$ agents to $i$, that one of them must be contained in a ball of a not larger radius and thus, $d^q(i,W) \le 3r^q_{N, k}(i)$.

    For the $q$-transferable core for $2q>\ell$, consider a set $N' \subseteq N$ with $n' = |N'| \ge \gamma \ell \frac{n}{k}$ and let $C' \subseteq C=N$ with $q \le |C'| \le \ell$.
    We again follow the proof of \Cref{thm:qC-IF-TC}.
    We let $\eta = \lceil \ell\frac{n}{k} \rceil$ and choose agents $j_\lambda \in J_\lambda$, $\lambda = 0, \dots, n'-\eta$ in a similar manner as done for the $q$-core:
    After having the agents in $J_\lambda$ put marks on their top $q$ candidates, there is a candidate $c$ (which is also an agent) that is marked by a set $J'_\lambda$ of at least $q \frac{n}{k}$ agents.
    As for every $i \in J'_\lambda$, $d(i, c) \le d^q(i, C') \le d^q(i_{\eta+\lambda}, C')$, we know that $J'_\lambda$ is contained in a ball of radius at most $y = 2d^q(i_{\eta+\lambda}, C')$.
    Thus, \textsc{fair greedy capture} captures a ball of radius at most $y$ which contains at least one agent from $J'_\lambda$: this is our agent $j_\lambda$.
    As the captured ball will contain at least $q$ centers, we have
    \[ d^q(j_0, W) \le 2y \le 4d^q(i_{\eta+\lambda}, C'). \]
    It remains to bound $d^q(i, W)$ for each $i \in N'' = N' \setminus \{j_0, \dots, j_{n'-\eta}\}$.
    As $|C'| \le \ell < 2q$, among the $q$ candidates in $C'$ closest to $i$ and those closest to $j_0$, we find at least one common candidate $c$ due to the pigeonhole principle.
    That is, there exists a candidate $c \in C'$ such that $d(i, c) \le d^q(i, C')$ and $d(j_0, c) \le d^q(j_0, C')$.
    Therefore,
    \[ d^q(i, W) \le d(i, c) + d(j_0, c) + d^q(j_0, W) \le d^q(i, C') + 5 d^q(i_\eta, C'). \]
    As again $\frac{|N''|}{n'-|N''|} \le \frac{1}{\gamma-1}$, we get the stated $q$-transferable core by bounding $\sum_{i \in N'} d^q(i, W)$ in the same manner as in \eqref{eq:TC2}, replacing $\alpha$ by $4$ and $d(i, c)$ by $d^q(i, C')$.
\end{proof}
}\proofFGCtwo

\section{Unconstrained Proportionally Representative Fairness (UPRF)}
\label{app:uprf}

\citet{ALM23a} introduced UPRF as a proportional fairness axiom that is always satisfiable and efficiently computable when the candidate space is unbounded.
It is defined as follows.

\begin{definition}
    \label{def:UPRF}
    An outcome $W$ satisfies \emph{Unconstrained Proportionally Representative Fairness (UPRF)}
    if there is no $\ell \in \mathbb{N}$, no group $N' \subseteq N$ with size $\lvert N'\rvert \ge \ell \frac{n}{k}$, and no distance threshold $y \in \mathbb{R}$ such that $\max_{i, i' \in N'} d(i, i') \le y$ and 
    $%
    \left \lvert \bigcup_{i \in N'} B(i, y) \cap W\right\rvert < \ell. 
    $%
\end{definition}

\newcommand{\exDefDPRF}{
\begin{example}
    \label{ex:dprf}
    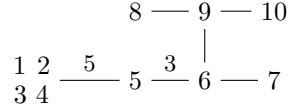
\begin{figure}
        \centering
        \begin{tikzpicture}[scale=0.92]
            \begin{scope}[yscale=-1,xshift=4.5cm]

                \node [align=center] (v1) at (.5,0.5) {$1\,\ 2$\\$3\,\,4$};
                
                \node (v4) at (2,0.5) {$5$};
                
                \node (v5) at (3,0.5) {$6$};
                
                \node (v6) at (4,0.5) {$7$};

                \node (v7) at (2,-0.5) {$8$};

                \node (v8) at (3,-0.5) {$9$};

                \node (v9) at (4,-0.5) {$10$};
                
                \draw (v1) -- node[above] {\footnotesize $5$} (v4);
                \draw (v4) -- node[above] {\footnotesize $3$} (v5);
                \draw (v5) -- (v6);
                \draw (v5) -- (v8);
                \draw (v7) -- (v8);
                \draw (v8) -- (v9);
            \end{scope}
        \end{tikzpicture}
	\caption{Metric space for \cref{ex:dprf}. Edges without labels have length $1$, the distance between any two points is given by the length of the shortest path between them.}
        \label{fig:example-dprf}
    \end{figure}
    Consider the instance depicted in \cref{fig:example-dprf} with $k = 5$. Again, consider the clustering $\{1,2,3,6,9\}$. This outcome does not satisfy \rpjr, since for $y = 4$, it holds that $\bigcap_{i = 5}^{10} B(i, 4) = \{6,7,9\} \subseteq \{5, \dots, 10\} = \bigcup_{i = 5}^{10} B(i, 4)$, however from $\{5, \dots, 10\}$ only $2$ agents instead of the required $3$ got selected into the clustering. 
    It does however satisfy UPRF. One can easily check this for $\ell = 1,2$. For $\ell = 3$, in the group $\{5, \dots, 10\}$ the distance between the agents $5$ and $10$ is $5$. Thus, UPRF only requires $3$ agents to be selected which are at a distance at most $5$ away from one of these agents. Since $1$ to $4$ are at a distance of $5$ to agent $5$ this is satisfied.
\end{example}
} \exDefDPRF

In terms of UPRF, we can characterize clustering instances into two meaningful groups:
Those in which UPRF is implied by \rpjr and those in which an outcome satisfying UPRF need not exist.
We remark that the former group contains all instances with $N \subseteq C$ and $k \le n$, which, as argued above, are two plausible restrictions in many settings.

\begin{observation}[
	label=obs:dprf-uprf,
	restate=DPRFUPRF
	]
	Consider an instance in which for each agent $i \in N$ there are at least $\frac{k}{n}$ unique candidates at distance $0$ to $i$.
	Then any outcome satisfying \rpjr also satisfies UPRF.
	In any instance that does not have this property, an outcome satisfying UPRF need not exist.
\end{observation}
\newcommand{\proofDPRFUPRF}{
\begin{proof}
    For the first part,
    consider an outcome $W$ that does not satisfy UPRF, i.e., there is a group $N' \subseteq N$ with $|N'| \ge \ell\frac{n}{k}$ with $y = \max_{i, i' \in N'} d(i, i')$ and $|\bigcup_{i \in N'} B(i, y) \cap W| < \ell$.
    As we have at least $\frac{k}{n}$ candidates per agent at distance $0$, we have $|\bigcap_{i \in N'} B(i, y) \cap C| \ge |N'| \cdot \frac{k}{n} \ge \ell$.
    Thus, $W$ also does not satisfy \rpjr.

    For the second part, consider an instance with $n < k$, in which there is an agent $i$ and less than $\frac{k}{n}$ candidates with distance $0$ to $i$.
    Then the set $N' = \{i\}$ has size at least $1 \ge \ell\frac{n}{k}$ for $\ell \ge \frac{k}{n} > 1$.
    As $\max_{i, i' \in N'} d(i, i') = 0$, UPRF requires for any outcome $W$ that $|B(i, 0) \cap W| \ge \ell \ge \frac{k}{n}$, but there aren't sufficiently many candidates in $B(i,0)$.
\end{proof}
}\proofDPRFUPRF

We further observe that, even if $N \subseteq C$ and $k \le n$, UPRF is not equivalent to \rpjr.
We can even show a slightly stronger statement (recall that \rpjr implies \rjr).

 \begin{observation}[
	 label=obs:uprf-ne-jr,
	 restate=UPRFneJR
	 ]
    In instances with $N \subseteq C$ and $k \le n$,
    there exist outcomes that satisfy UPRF but not \rjr.
\end{observation}
\newcommand{\proofUPRFneJR}{
\begin{proof}
    Consider an instance with $k = 1$ and $n = 3$ agents $\{1, 2, 3\}$ and one candidate $c$ on a path, such that $d(1, 2) = d(2, 3) = 1$ and $d(3, c) = 2$ with all other distances induced by this path.
    Then the outcome $W = \{c\}$ satisfies UPRF, since the distance between $1$ and $3$ is $2$.
    However, it does not satisfy \rjr,
    as the three agents all have candidate $2$ within distance $1$.
\end{proof}
}\proofUPRFneJR
For the sake of completeness, we prove bounds on the several fairness measures implied by UPRF. 
We first show that UPRF implies bounds on proportional and individual fairness and the transferable core.

\begin{theorem}[
	label=thm:uprf-single-cand,
	restate=UPRFsinglecand
	]
    An outcome satisfying UPRF also satisfies
    $\frac{3 + \sqrt{17}}{2}$-proportional fairness, $3$-individual fairness, and is in the $(\gamma, \frac{3\gamma}{\gamma-1})$-transferable core for any $\gamma > 1$. 
\end{theorem}
\begin{proof}
    The proportional fairness was shown by \citet{ALM23a}. 

    For individual fairness, we again assume that $N \subseteq C$.
    Let $i \in N$ be any agent and let $N' \subseteq N$ be a group of at least $\frac{n}{k}$ agents closest to $i$ (including $i$).
    Let $y = \max_{i' \in N'} d(i, i')$. Then, the maximum distance between any two points in $N'$ is at most $2y$ and UPRF implies that there must be a $j \in N'$ and a candidate $c' \in W$
    such that $c' \in B(i, 2y) \cap W$.
    By the triangle inequality, $d(i, W) \le d(i, j) + d(j, W) \le 3 \max_{i' \in N'} d(i, i')$;
    thus $W$ satisfies $3$-individual fairness.

    For the transferable core, let $N' \subseteq N$ be a group of size $n' \ge \gamma \frac{n}{k}$ and let $c \notin W$.
    Let $\eta = \lceil \frac{n}{k} \rceil$.
    We can now use the same argumentation as in the proof of \Cref{thm:PF-TC}:
    Again, we order the agents $i_1, \dots, i_{n'}$ in $N'$ by their increasing distance to $c$.
    We choose the elements $j_\lambda \in J_\lambda$, $\lambda = 0, \dots, n' - \eta$, such that $d(j_\lambda, W) \le 2d(i_{\eta+\lambda}, c)$;
    such a $j_\lambda$ always exists by UPRF, as $|J_\lambda| = \eta$.
    By replacing every occurrence of $\alpha$ by $2$ in the inequality in the proof of \Cref{thm:PF-TC}, we obtain the claimed bound.
\end{proof}

Concerning the tightness of the bounds, we observe that \citet{ALM23a} already proved the tightness of the proportional fairness bound. For individual fairness, we can show that the bound is tight, while for the transferable core we can derive the same bound as in \Cref{thm:lb:tq}.

\begin{theorem}[
	label=thm:uprf-single-cand-lb,
	restate=UPRFsinglecandLB
	]
    For any $\varepsilon > 0$ there exist instances with outcomes that satisfy UPRF but are not $\frac{3 + \sqrt{17}}{2} - \varepsilon$ proportional fair, $3 - \varepsilon$-individual fair, or in the $(\gamma, \frac{\gamma + 1}{\gamma - 1} - \varepsilon)$-transferable core.
\end{theorem}
\begin{proof}
    Firstly, the case for proportional fairness follows from \citet[Proposition~1]{ALM23a}. For individual fairness, again consider an instance with $k = 1$, three agents $i_1, i_2, i_3$  and one candidate $c_1$ on a path, such that $d(i_1, i_2) = d(i_2, i_3) = 1$ and $d(i_3, c_1) = 2$. As argued before, just selecting $c_1$ satisfies UPRF. However, it only satisfies $3$-individual fairness, since it is at distance $3$ to agent $i_2$ who is at distance $1$ to both $i_1$ and $i_3$.
    For the transferable core, we observe that the instance described in \Cref{thm:lb:tq} with $\alpha = 1$ also applies to UPRF.
\end{proof}

Similarly, under the condition that $N \subseteq C$, we derive bounds for UPRF towards the fairness measures for deviating to multiple candidates.
Indeed, the bound on the $q$-core matches the ont for \rpjr.
Nevertheless, for the $q$-transferable core, we were not able to obtain an equally good bound as for for \rpjr.
\begin{theorem}[
	label=thm:UPRF-multi-cand,
	restate=UPRFmulticand
	]
    In an instance with $N \subseteq C$ let $W$ be an outcome that satisfies UPRF. Then, for any $q \le k$,
    \begin{compactenum}[(1)]
        \item $W$ is in the $4$-$q$-core;
        \item $W$ satisfies $3$-$q$-individual fairness if $k \le n$;
        \item $W$ is in the $(\gamma, \frac{5\gamma + 1}{\gamma-1})$-$q$-transferable core for any $\gamma > 1$ and if $2q>\ell$.
    \end{compactenum}
\end{theorem}
\begin{proof}
    We start with the $\alpha$-$q$-core. This follows almost identically to \Cref{thm:polytime}. Following \Cref{lem:pigeon} there again exists a candidate $c$ ranked in the top-$q$ by at least $q\frac{n}{k}$ many agents $N''$
    Out of these, let $i_1$ be the agent maximizing $d(i_1, c)$ and $i_2$ be the agent minimizing $d^q(i_2, C')$.
    Now, for every $i, i' \in N''$ we have $d(i', i) \le d(i', c) + d(c, i) \le 2d(i_1,c) \eqqcolon y$. 
   Thus, since $W$ satisfies UPRF we have $|\bigcup_{i \in N''} B(i, y) \cap W| \ge q$.
    Therefore, we get that $d^q(i_1, W) \le 2d(i_1, c) + y =  4 d(i_1, c) \le 4 d^q(i_1,c)$ which immediatley gives us a $4$ approximation to the $q$-core.

    For $q$-individual fairness, let $x = r^q_{N, k}(i)$ for an agent $i \in N'$.
    Then the maximum distance between any two of the closest $q \frac{n}{k}$ agents to $i$ (including $i$) is $2x$.
    Thus, UPRF requires that there are at least $q$ candidates within a distance of at most $2x$ from any of these agents and therefore $d^q(i, W) \le 3x$.

    Finally, for the $q$-transferable core with $2q>\ell$, we again follow the proof of \Cref{thm:rPJR}.
    Letting $N' \subseteq N$ with $|N'| = n' = \lceil\gamma \ell\frac{n}{k}\rceil$ and $C' \subseteq C$ with $q \le |C'| \le \ell$,
    we again have the agents $i_1, \dots, i_\eta$ pick their top $q$ candidates and will find a candidate $c$ marked by a set $N''$ of at least $q \frac{n}{k}$ agents.
    Let $y = d^q(i_\eta, C')$.
    Then the largest distance between any two agents in $N''$ is at most $2y$; therefore by UPRF, $|\bigcup_{j \in N''} B(j, 2y) \cap W| \ge q$.
    Therefore, in analogy to the other proof, we have $d^q(i, W) \le d^q(i, C') + 4y$;
    thus we obtain a $(\gamma, \alpha)$-$q$-transferable core with
    \[ \alpha \le 1 + \frac{4n'}{n'-\eta+1} \le \frac{5\gamma + 2 \frac{k}{qn} - 1}{\gamma - 1}. \]
    As we can again assume that $\frac{k}{qn} \le 1$ we obtain a bound of $\frac{5\gamma + 1}{\gamma-1}$.
\end{proof}

\end{document}